\newenvironment{manualdef}[1]{%
  \manualtheoreminner
}{\endmanualtheoreminner}
\newenvironment{manualprop}[1]{%
  \manualpropinner
}{\endmanualtheoreminner}
\newcommand\blfootnote[1]{%
  \begingroup
  \renewcommand\thefootnote{}\footnote{#1}%
  \addtocounter{footnote}{-1}%
  \endgroup
}
\title{Can contrastive learning avoid shortcut solutions?}
\author{%
  Joshua Robinson \\
  MIT CSAIL \& LIDS\\
  \texttt{joshrob@mit.edu} 
  \And
  Li Sun  \\
  University of Pittsburgh\\
  \texttt{lis118@pitt.edu} \\
  \And
  Ke Yu  \\
  University of Pittsburgh\\
  \texttt{yu.ke@pitt.edu} \\
  \And
  Kayhan Batmanghelich  \\
  University of Pittsburgh\\
  \texttt{kayhan@pitt.edu} \\\And
  Stefanie Jegelka  \\
  MIT CSAIL\\
  \texttt{stefje@csail.mit.edu} \\
    \And
    Suvrit Sra \\
  MIT LIDS\\
  \texttt{suvrit@mit.edu} 
}
\begin{document}

\maketitle

\begin{abstract} 
The generalization of representations learned via contrastive learning depends crucially on  what features of the data are extracted. However, we observe that the contrastive loss does not always sufficiently guide which features are extracted, a behavior that can negatively impact the performance on downstream tasks via ``shortcuts'', i.e., by inadvertently suppressing important predictive features. We find that feature extraction is influenced by  the \emph{difficulty} of the so-called instance discrimination task (i.e., the task of discriminating pairs of similar points from pairs of dissimilar ones).
Although harder pairs improve the representation of some features, the improvement comes at the cost of suppressing previously well represented features.
In response, we propose \emph{implicit feature modification} (IFM), a method for altering positive and negative samples in order to guide contrastive models towards capturing a wider variety of predictive features. Empirically, we observe that IFM reduces feature suppression, and as a result improves performance on vision and medical imaging tasks. The code is available at: \url{https://github.com/joshr17/IFM}.
\end{abstract}

\vspace{-15pt}
\section{Introduction}
\vspace{-5pt}
Representations trained with contrastive learning are adept at solving various vision tasks including classification, object detection, instance segmentation, and more \cite{chen2020simple,he2020momentum,tian2019contrastive}. In contrastive learning, encoders are trained to discriminate pairs of positive (similar) inputs from a selection of negative (dissimilar) pairs. This task is called \emph{instance discrimination}: It is often framed using the InfoNCE loss \cite{gutmann2010noise,oord2018representation}, whose minimization forces encoders to extract input features that are sufficient to discriminate similar and dissimilar pairs.\blfootnote{Correspondence to Joshua Robinson ({\texttt{joshrob@mit.edu}}).}

However, learning  features that are discriminative during training does not guarantee a model will generalize.
Many studies find  inductive biases in supervised learning toward \emph{simple} ``shortcut'' features and decision rules~\cite{hermann2020shapes,huh2021low,nguyen2020wide} which result in unpredictable model behavior under perturbations \cite{ilyas2019adversarial,szegedy2013intriguing} and failure outside the training distribution \cite{beery2018recognition,recht2019imagenet}.  Simplicity bias has various potential sources \cite{geirhos2020shortcut} including training methods \cite{chizat2020implicit,lyu2019gradient, soudry2018implicit} and  architecture design \cite{geirhos2018imagenet,hermann2019origins}. Bias towards shortcut decision rules also hampers transferability in contrastive learning  \cite{chen2020intriguing}, where it is in addition influenced by the instance discrimination task. These difficulties lead us to ask: can the contrastive instance discrimination task itself be modified to avoid learning shortcut solutions? 

\iffalse
there is an additional factor influencing contrastive feature learning: unlike with supervised learning, contrastive instance discrimination tasks themselves must be \emph{designed}. The relation between contrastive instance discrimination tasks and the process leading to representation of some input features but not others is not yet clear \cite{chen2020intriguing}. 
Better understanding of the phenomenon is crucial to understanding the usefulness of a representation -- if an encoder represents one feature at the expense of another, this determines which tasks the encoder can solve, which it fails, and its potential for out-of-distribution generalization. 

So instead, in this paper we ask: is there a way to modify contrastive learning to better capture the variety of predictive features? Unlike supervised learning, contrastive learning introduces an additional source of feature learning bias: contrastive instance discrimination tasks -- i.e. positive and negative pairs -- must themselves be \emph{designed}.
\fi

We approach this question by studying the relation between contrastive instance discrimination and feature learning. First, we theoretically explain why optimizing the InfoNCE loss alone does not guarantee avoidance of shortcut solutions that \emph{suppress} (i.e., discard) certain input features   \cite{chen2020intriguing,geirhos2020shortcut}. Second, despite this negative result, we show that it is still possible to trade off representation of one feature for another using simple  methods for adjusting the difficulty of instance discrimination. 
However, these methods have an important drawback: improved learning of one feature often comes at the cost of harming  another. That is, feature suppression is still prevalent. In response, we propose \emph{implicit feature modification}, a technique that encourages encoders to discriminate instances using multiple input features. Our method introduces no computational overhead, reduces feature suppression (without trade-offs), and improves generalization on  various downstream tasks.

\textbf{Contributions.} In summary, this paper makes the following main contributions:
\vspace{-5pt}
\begin{enumerate}
\setlength{\itemsep}{1pt}
    \item It analyzes feature suppression in contrastive learning, and explains why feature suppression can occur when optimizing the InfoNCE loss. 
    \item It studies the relation between instance discrimination tasks and feature learning; concretely, adjustments to instance discrimination difficulty leads to different features being learned.
    \item It proposes \emph{implicit feature modification}, a simple and efficient method that reduces the tendency to use feature suppressing shortcut solutions and improves generalization. 
\end{enumerate}

\vspace{-10pt}
\subsection{Related work}
\vspace{-5pt}

Unsupervised representation learning is enjoying a renaissance  driven by steady advances in effective frameworks  \cite{caron2020unsupervised,chen2020simple,he2020momentum,hjelm2018learning,oord2018representation,tian2019contrastive,tian2020makes,zbontar2021barlow}. As well as many effective  contrastive methods, Siamese approaches that avoid representation collapse without explicitly use of negatives have also been proposed \cite{chen2020exploring,grill2020bootstrap,zbontar2021barlow}. Pretext task design has been at the core of progress in self-supervised learning. Previously popular tasks include  image colorization \cite{zhang2016colorful} and inpainting \cite{pathak2016context}, and theoretical work shows pre-trained encoders can provably generalize if a pretext task necessitates the learning of features that solve downstream tasks \cite{lee2020predicting,robinson2020strength}. In contrastive learning, augmentation strategies are a key design component  \cite{chen2020simple,wang2020unsupervised,wang2021augmentations}, as are negative mining techniques  \cite{chuang2020debiased,he2020momentum,kalantidis2020hard,robinson2020contrastive}. While feature learning in contrastive learning has received less attention, recent work finds that low- and mid-level features are more important for transfer learning \cite{zhao2020makes}, and feature suppression can occur \cite{chen2020intriguing} just as with supervised learning \cite{geirhos2018imagenet,hermann2020shapes}. Combining contrastive learning with an auto-encoder has also been considered \cite{li2020information}, but was found to harm representation of some features in order to avoid suppression of others. Our work is distinguished from prior work through our focus on how the design of the instance discrimination task itself affects which features are learned.

\vspace{-5pt}
\section{Feature suppression in contrastive learning}\label{sec: task design}
\vspace{-5pt}

 \begin{figure}[t] %{6.5cm}
 \vspace{-5pt}
  \includegraphics[width=\textwidth]{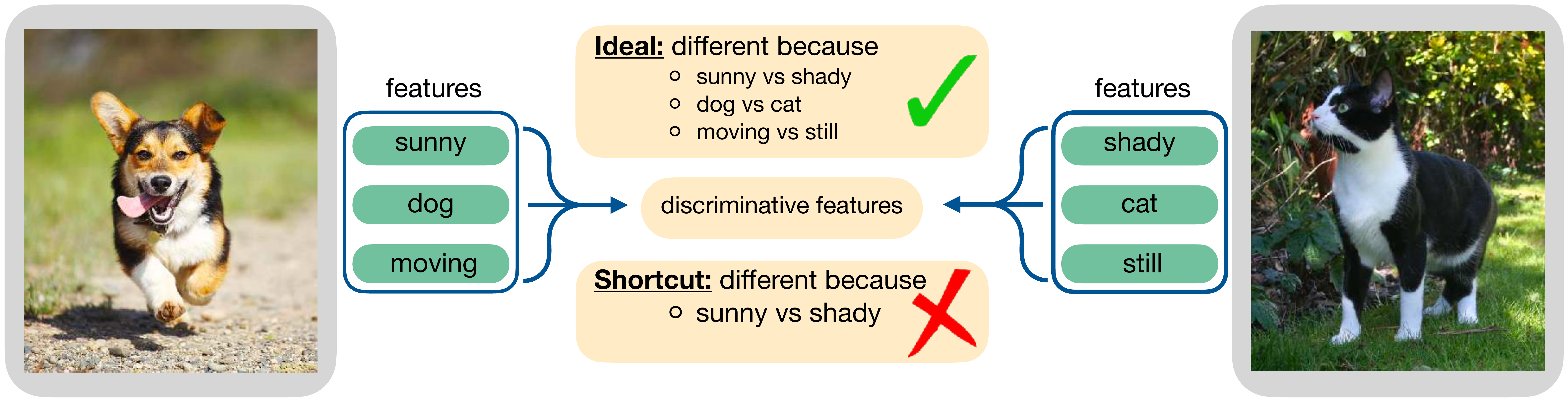}
  \vspace{-15pt}
     \caption{An ideal encoder would  discriminate between instances using multiple distinguishing features instead of finding simple shortcuts that suppress features. We show that InfoNCE-trained encoders can suppress features (Sec. \ref{sec: optimizing the InfoNCE loss can lead to feature suppression}). However, making instance discrimination harder during training can trade off representation of different features (Sec. \ref{sec: Controlling feature suppression}). To avoid the need for trade-offs we propose \emph{implicit feature modification} (Sec. \ref{sec: method}), which reduces suppression in general, and improves generalization (Sec. \ref{sec: experiments}). }
      \label{fig: fig1}
        \vspace{-15pt}
\end{figure}

Feature suppression refers to the phenomenon where, in the presence of multiple predictive input features, a model uses only a subset of them and ignores the others. The selected subset often corresponds to intuitively ``simpler'' features, e.g., color as opposed to shape. Such features lead to ``shortcut'' decision rules that might perform well on training data, but can harm generalization and lead to poor robustness to data shifts. Feature suppression has been identified as a common problem in deep learning \cite{geirhos2020shortcut}, and both  supervised and contrastive learning suffer from biases induced by the choice of optimizer and architecture.
However, contrastive learning bears an additional potential source of bias: \emph{the choice of instance discrimination task}. Which positive and negative pairs are presented  critically affects which features are discriminative, and hence which features are learned. In this work we study the relation between feature suppression and instance discrimination.

First, we  explain why optimizing the InfoNCE loss is insufficient in general to avoid feature suppression, and show how it can lead to counter-intuitive generalization (Sec. \ref{sec: optimizing the InfoNCE loss can lead to feature suppression}). Given this negative result, we then ask if it is at least possible to \emph{control} which features a contrastive encoder learns? We find that this is indeed the case, and that adjustments to the instance discrimination task lead to different features being learned (Sec. \ref{sec: Controlling feature suppression}). However, the primary drawback of these adjustments is that improving one feature often comes at the cost of harming representation of another. That is, feature suppression is still prevalent. Addressing this drawback is the focus of Sec. \ref{sec: method}. 
\vspace{-5pt}
\subsection{Setup and definition of feature suppression}\label{sec: notarion setup}
\vspace{-5pt}
Formally, we assume that the data has 
 underlying feature spaces $\mathcal Z^1, \ldots , \mathcal Z^n$ with a distribution $p_j$ on each $\mathcal Z^j$. Each $j \in [n]$, corresponding to a latent space $\mathcal Z^j$, models a distinct feature. We write the product as $\mathcal Z^S = \prod_{j\in S} \mathcal Z^j$, and simply write $\mathcal Z$ instead of $\mathcal Z^{[n]}$ where $[n] = \{1, \ldots , n\}$. A set of features $z  = (z^j)_{j \in [n]}\in \cal Z$ is generated by sampling each coordinate $z^j \in \mathcal Z^j$ independently, and we denote the measure on $\mathcal Z$ induced by $z$ by $\lambda$. Further, let $\lambda(\cdot | z^S)$ denote the conditional measure on  $\mathcal Z$ for fixed $z^S$. For $S \subseteq [n]$ we use $z^S$ to denote the projection of $z$ onto $ \mathcal Z^S$. Finally, an injective map $g: \mathcal Z \rightarrow \mathcal X$ produces observations $x=g(z)$.

Our aim is to train an encoder $f :\mathcal X \rightarrow  \mathbb{S}^{d-1}$ to map input data $x$ to the surface of the unit sphere $\mathbb{S}^{d-1}= \{ u \in \mathbb{R}^d : \| u\|_2 = 1\}$ in such a way that $f$ extracts useful information. To formally define feature suppression, we need the \emph{pushforward} $h \# \nu(V)=\nu(h^{-1}(V))$ of a  measure $\nu$ on a space $\cal U$ for a measurable map $h : \cal U \rightarrow \cal V$ and  measurable $V \subseteq \cal V$, where $h^{-1}(V)$ denotes the preimage.

%
%Finally, for a measure $\nu$ on a space $\cal U$ and a measurable map $h : \cal U \rightarrow \cal V$,  let $h \# \nu$ denote the pushforward of $\nu$ by $h$ defined by the formula $h \# \nu(V)=\nu(h^{-1}(V))$ for measurable $V \subseteq \cal V$, where $h^{-1}(V)$ denotes the preimage. Each $j \in [n]$, corresponding to a latent space $\mathcal Z^j$ models a distinct feature, and the upcoming definition precisely states what it means for an encoder $f$ to suppress or distinguish a set of features $S \subseteq [n]$.
%
%
\begin{defn}
Consider an encoder $f :\mathcal X \rightarrow  \mathbb{S}^{d-1}$ and features $S \subseteq [n]$. For each $z^S \in %\in
\mathcal Z^S$, let $\mu (\cdot| z^S) = (f\circ g) \# \lambda (\cdot| z^S) $ be the pushforward measure  on $\mathbb{S}^{d-1}$ by $f\circ g$ of the conditional $\lambda (\cdot| z^S)$. 
\begin{enumerate}
    \vspace*{-7pt}   
    \setlength{\itemsep}{0pt}
    \item  $f$ \emph{suppresses} $S$ if for any pair $z^S , \bar{z}^S\in \mathcal Z^S$, we have $\mu (\cdot| z^S) =  \mu (\cdot| {\bar{z}^S})$. 
    %\vspace{-3pt}
    \item  $f$ \emph{distinguishes} $S$ if for any pair of distinct $z^S, \bar{z}^S\in \mathcal Z^S$, measures $\mu (\cdot| z^S), \mu (\cdot| {\bar{z}^S})$ have disjoint support. 
    \vspace{-5pt}
\end{enumerate}
\end{defn}
\vspace{-4pt}

Feature suppression is thus captured in a distributional manner, stating that $S$ is suppressed if the encoder distributes inputs in a way that is invariant to the value $z^S$. Distinguishing features, meanwhile, asks that the encoder $f$ separates points with different features $z^S$ into disjoint regions. We consider training an encoder $f :\mathcal X \rightarrow  \mathbb{S}^{d-1} $ to optimize the InfoNCE loss \cite{oord2018representation,gutmann2010noise},
%\vspace{-12pt}
%
\begin{equation}\label{eqn: InfoNCE loss}
\mathcal L_m(f) = \mathbb{E}_{x,x^+,\{x_i^-\}_{i=1}^m} \bigg [ - \log \frac{e^{f(x)^\top f(x^+) /\tau}}{e^{f(x)^\top f(x^+) / \tau} + \sum_{i=1}^m e^{f(x)^\top f(x_i^-) / \tau}}\bigg ],  
\end{equation} 
%\vspace{-10pt}
%
where $\tau$ is known as the \emph{temperature}. Positive pairs $x,x^+$ are generated by first sampling $z \sim \lambda$, then independently sampling two random augmentations $a,a^+ \sim \mathcal A$, $a :\mathcal X \rightarrow \mathcal X$ from a distribution $\mathcal A$,
and setting $x=a(g(z))$ and $x^+=a^+(g(z))$. We assume $\mathcal A$ samples the identity function $a(x)=x$ with non-zero probability (``$x$ is similar to itself''), and that there are no collisions: $a(x)\neq a'(x')$ for all $a,a'$, and all $x\neq x'$. Each negative example $x_i^-$ is generated as $x^-_i = a_i (g (z_i))$, by independently sampling features $z_i \sim \lambda$ and an augmentation $a_i \sim \mathcal A$.

\vspace{-5pt}
\subsection{Why optimizing the InfoNCE loss can still lead to feature suppression}\label{sec: optimizing the InfoNCE loss can lead to feature suppression}
\vspace{-5pt}

Do optimal solutions to the InfoNCE loss automatically avoid shortcut solutions? Unfortunately, as we show in this section, this is not the case in general; there exist both optimal solutions of the InfoNCE loss that do and solutions that do not suppress a given feature. Following previous work \cite{robinson2020contrastive,wang2020understanding,zimmermann2021contrastive}, we analyze the loss as the number of negatives goes to infinity,
\begin{equation*}
\mathcal L = \lim_{m \rightarrow \infty } \big \{ \mathcal L_m(f) - \log m  - \tfrac2{\tau}\big \} = \tfrac{1}{2\tau}\mathbb{E}_{x,x^+}  \| f(x) - f(x^+) \|^2 + \mathbb{E}_{x^+ } \log \big [  \mathbb{E}_{x^-} e^{f(x^+)^\top f(x^-)/\tau} \big ].
\end{equation*}
We subtract $\log m$ to ensure the limit is finite, and use $x^-$ to denote a random sample with the same distribution as $x^-_i$. 
Prop.~\ref{thm: new main result} (proved in App.~ \ref{appx: proofs}) shows that, assuming the marginals $p_j$ are uniform, the InfoNCE loss is optimized both by encoders that suppress feature $j$, and by encoders that distinguish $j$. 

\begin{wrapfigure}{r}{0.54\textwidth}
\vspace{-5pt}
  \begin{center}
    \includegraphics[width=0.30\textwidth]{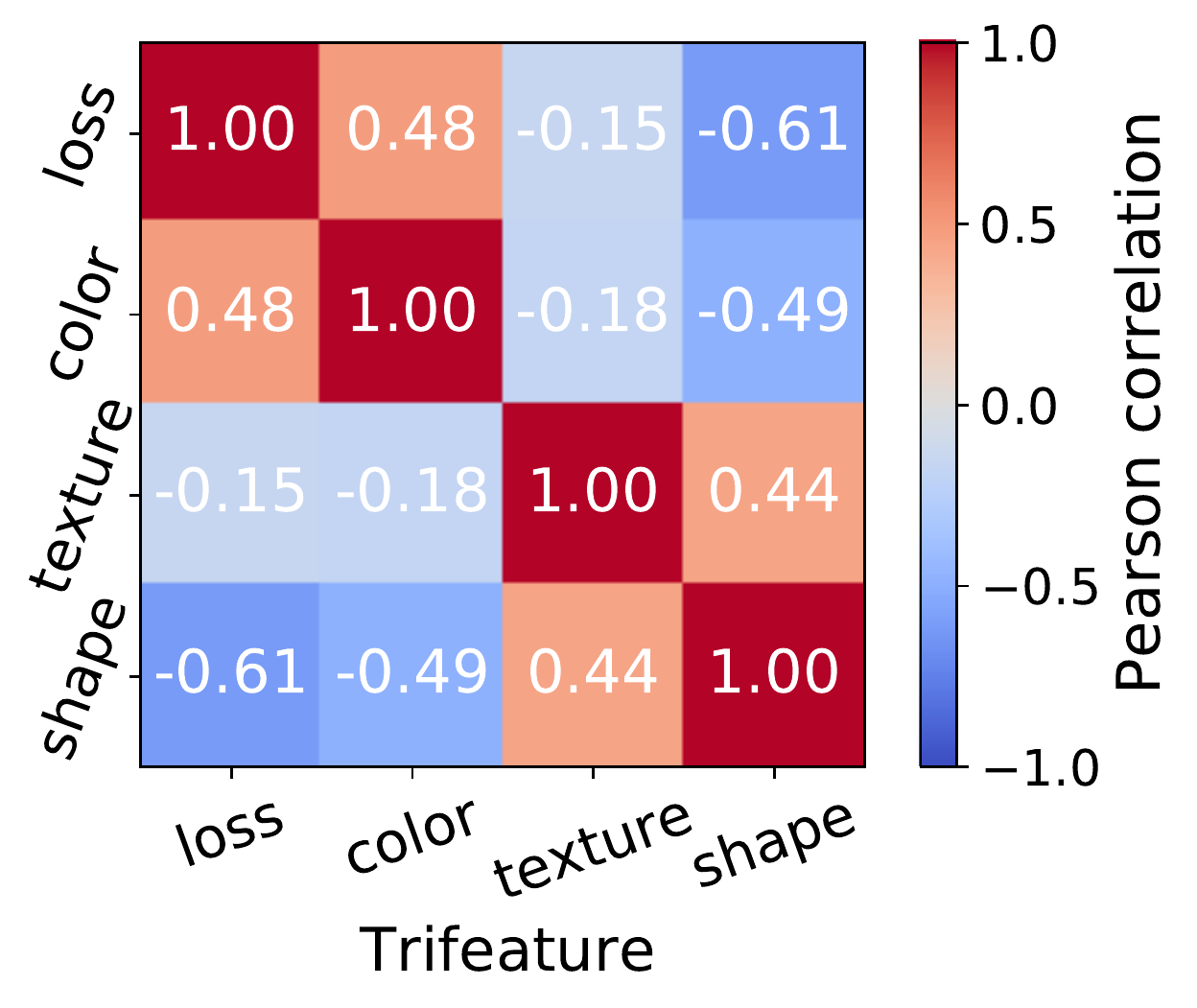}
    \includegraphics[width=0.23\textwidth]{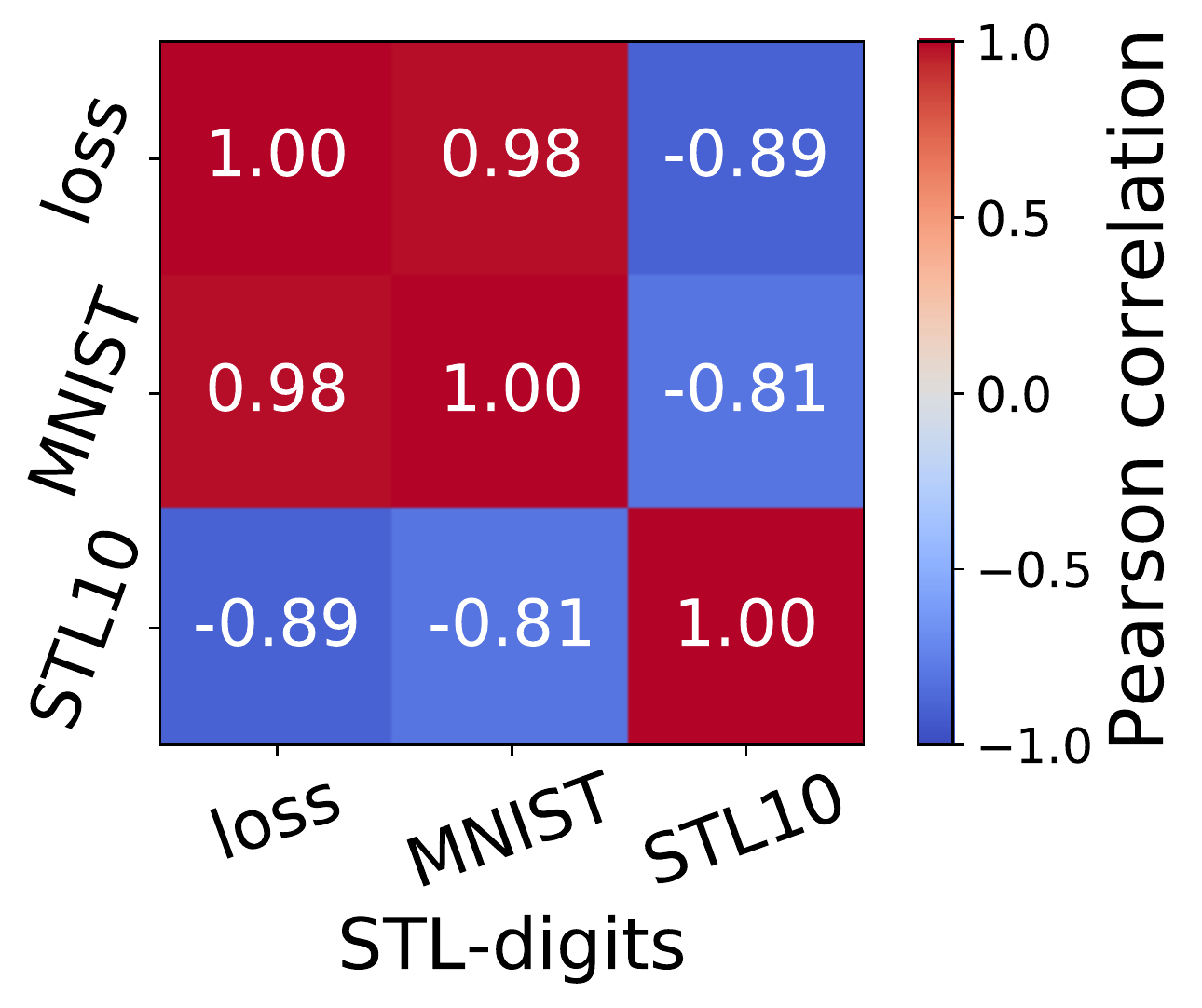}
    \vspace{-20pt}
    \caption{Linear readout error on different downstream tasks can be negatively correlated. Further, lower InfoNCE loss does not always yield not lower error: error rates on texture, shape and STL10 prediction are \emph{negatively correlated} with InfoNCE loss. }
    \label{fig: trifeature corr}
  \end{center}
  \vspace{-20pt}
\end{wrapfigure}

\begin{prop}\label{thm: new main result} Suppose that $p_j$ is uniform on $\mathcal Z^j = \mathbb{S}^{d-1}$ for all $j \in [n]$. Then for any feature $j \in [n]$ there exists an encoder $f_\text{supp}$ that suppresses feature $j$ and encoder $f_\text{disc}$ that discriminates $j$ but both attain $\min_{f: \text{ measurable}}\mathcal L(f)$.
\end{prop}
\vspace{-5pt}

The condition that $p_j$ is uniformly distributed on $\mathcal Z^j = \mathbb{S}^{d-1}$ is similar to conditions used in previous work \cite{zimmermann2021contrastive}. Prop.~\ref{thm: new main result} shows that empirical observations of feature suppression \cite{chen2020intriguing} (see also Fig.~\ref{fig: feature suppression}) are not simply due to a failure to sufficiently optimize the loss, but that the possibility of feature suppression is \emph{built into} the loss. What does Prop.~\ref{thm: new main result} imply for the generalization behavior of encoders? Besides explaining why feature suppression can occur, Prop.~\ref{thm: new main result} also suggests another counter-intuitive possibility: \emph{lower InfoNCE loss may actually lead to worse performance on some tasks}. 

To empirically study whether this possibility manifests in practice, we use two datasets with known semantic features: (1) In the Trifeature data, \cite{hermann2020shapes} each image is $128 \times 128$ and has three features: color, shape, and texture, each taking possible 10 values. See Fig. \ref{fig: trifeature samples}, App.~\ref{appdx: experiments} for sample images.
(2) In  the STL-digits data, samples combine MNIST digits and STL10 objects by placing copies of a randomly selected MNIST digit on top of an STL10 image. See Fig.~\ref{fig: stl-digits samples} App.~\ref{appdx: experiments} for sample images.

We train encoders with ResNet-18 backbone using SimCLR \cite{chen2020simple}. To study correlations between the loss value and error on downstream tasks, we train $33$ encoders on Trifeature and $7$ encoders on STL-digits with different hyperparameter settings (see App.~\ref{appdx: stl-digits suppression} for full details on training and hyperparameters). For Trifeature, we compute the Pearson correlation between InfoNCE loss and linear readout error when  predicting $\{\text{color, shape, texture}\}$. Likewise, for STL-digits we compute correlations between the InfoNCE loss and MNIST and STL10 prediction error.

Fig. \ref{fig: trifeature corr} shows that performance on different downstream tasks is not always positively correlated. For Trifeature, color error is negatively correlated with shape and texture,  while for STL-digits there is a strong negative correlation between MNIST digit error  and STL10 error. Importantly, lower InfoNCE loss is correlated with lower prediction error for color and MNIST-digit, but with \emph{larger} error for shape, texture and STL10. Hence, lower InfoNCE loss can improve representation of some features (color, MNIST digit), but may actually \emph{hurt} others.
%(texture, shape, STL10 object). 
This conflict is likely due to the simpler color and MNIST digit features being used as shortcuts. Our observation is an important addition to the statement of \citet{wang2020understanding} that lower InfoNCE loss improves generalization: the situation is more subtle -- whether lower InfoNCE helps generalization on a task depends on the use of shortcuts.

%\vspace{-10pt}
\subsection{Controlling feature learning via the difficulty of instance discrimination}\label{sec: Controlling feature suppression}
\vspace{-5pt}

The previous section showed that the InfoNCE objective has solutions that suppress features. Next, we ask what factors determine which features are suppressed? Is there a way to target \emph{specific} features and ensure they are encoded? One idea is to use \emph{harder} positive and negative examples. Hard examples are precisely those that are not easily distinguishable using the currently extracted features. So, a focus on hard examples may change the scope of the captured features. To test this hypothesis, we consider two methods for adjusting the difficulty of positive and negative samples: 
\vspace{-3pt}
\begin{enumerate}\setlength{\itemsep}{0pt}
    \item Temperature $\tau$ in the InfoNCE loss (Eqn. \ref{eqn: InfoNCE loss}). Smaller $\tau$ places higher importance on positive an negative pairs with high similarity \cite{wang2020understandingbehaviour}.
    \item Hard negative sampling method of Robinson et al. \cite{robinson2020contrastive}, which uses importance sampling to sample harder negatives. The method introduces a hardness concentration parameter $\beta$, with larger $\beta$ corresponding to harder negatives (see \cite{robinson2020contrastive} for full details). 
\end{enumerate}
\vspace{-5pt}

Results reported in Fig.~\ref{fig: feature suppression} (also Fig.~\ref{fig: stl10 appdx} in App.~\ref{appdx: stl-digits suppression}) show that varying instance discrimination difficulty---i.e., varying temperature $\tau$ or hardness concentration $\beta$---enables  trade-offs between which features are represented. On Trifeature, easier instance discrimination (large $\tau$, small $\beta$) yields good performance on `color'---an ``easy'' feature for which a randomly initialized encoder already has high linear readout accuracy---while generalization on the harder texture and shape features is poor. The situation \emph{reverses} for harder instance discrimination (small $\tau$, large $\beta$). We hypothesize that the use of ``easy'' features with easy instance discrimination is analogous to simplicity biases in supervised deep networks  \cite{hermann2019origins,huh2021low}. As with supervised learning \cite{geirhos2018imagenet,hermann2019origins}, we observe a bias for texture over shape in convolutional networks, with texture prediction always outperforming shape. 

That there are simple levers for controlling which features are learned already distinguishes contrastive learning from supervised learning, where attaining such control is less easy (though efforts have been made \cite{jacobsen2018excessive}). However, 
%these results only show it is possible to sacrifice representation 
these results show  that representation of one feature must be sacrificed in exchange for learning another one better. To understand how to develop methods for improving feature representation without suppressing others, the next result (proof in  App.~\ref{appx: proofs}) examines more closely \emph{why} there is a relationship between (hard) instance discrimination tasks and feature learning.

\begin{prop}[Informal] Suppose that $p_j$ is uniform on $\mathcal Z^j = \mathbb{S}^{d-1}$ for all $j \in [n]$.  Further, for $S \subseteq [n]$ suppose that $x,x^+,\{x_i^-\}_i$ are conditioned on the event that they have the same features $S$. Then any $f$ that minimizes the (limiting) InfoNCE loss suppresses features $S$.
\label{prop: S held constant implies suppressed}
%\vspace{-10pt}
\end{prop}

 \begin{figure}[t] %{6.5cm}
  \includegraphics[width=\textwidth]{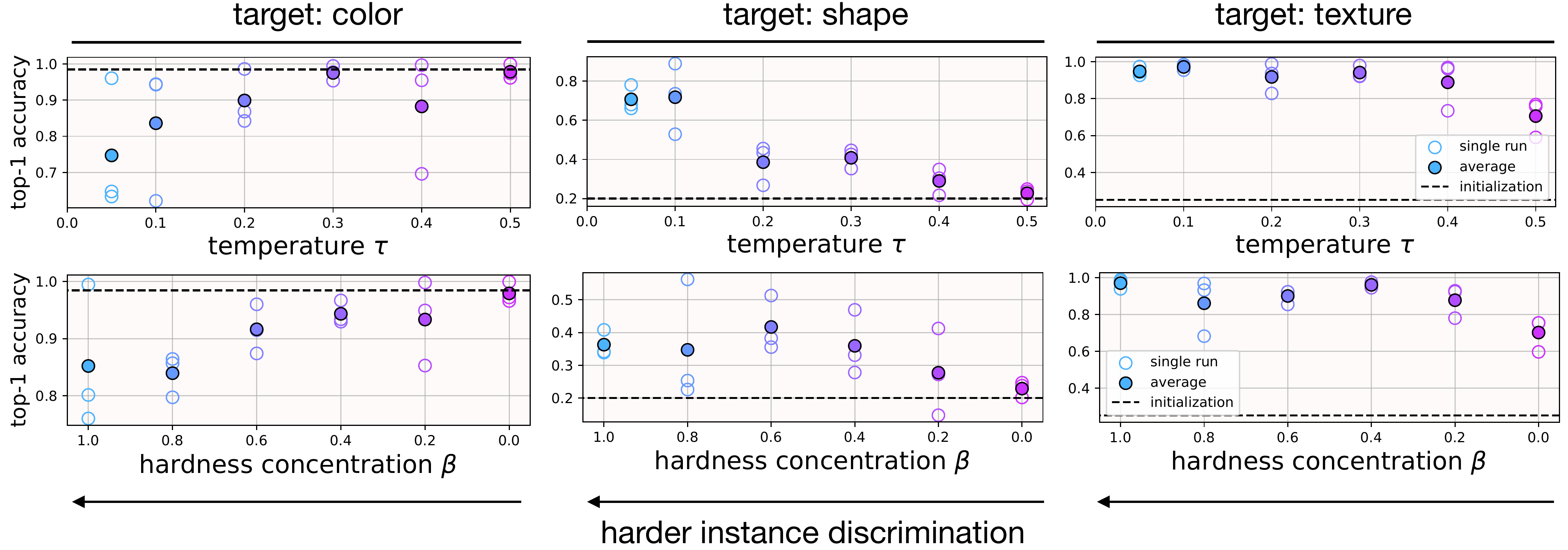}
  \vspace{-14pt}
\caption{Trifeature dataset \cite{hermann2020shapes}. 
The \emph{difficulty} of instance discrimination affects which features are learned (Sec.~\ref{sec: Controlling feature suppression}). When instance discrimination is easy (big $\tau$, small $\beta$), encoders represent color well and other features badly. When instance discrimination is hard (small $\tau$, big $\beta$), encoders represent more challenging shape and texture features well, at the expense of color. }
\vspace{-10pt}
\label{fig: feature suppression}
\end{figure}

The positive and negative instances in Prop.~\ref{prop: S held constant implies suppressed} must be distinguished with features in $S^c$.  Relating this point to the above observations, assume that an encoder exclusively uses features $S$. Any positives and negatives that do not (much) differ in features $S$ are difficult for the encoder. By Prop.~\ref{prop: S held constant implies suppressed}, focusing the training on these difficult examples pushes the encoder to instead use features in $S^c$, i.e., to learn \emph{new} features. But at the same time, the proposition also says that a strong focus on such hard negative pairs leads to suppressing the originally used features $S$, explaining the results in Fig.~\ref{fig: feature suppression}. 
While the two techniques for adjusting instance difficulty we studied were unable to avoid feature suppression, this insight forms the motivation for \emph{implicit feature modification}, which we introduce next.

\vspace{-5pt}
\section{Implicit feature modification for reducing feature suppression}\label{sec: method}
\vspace{-5pt}

The previous section found that simple adjustments to instance discrimination \emph{difficulty} could significantly alter which features a model learns.
Prop. \ref{prop: S held constant implies suppressed} suggests that this ability to modify which features are learned stems from holding features constant across positive and negative samples. However, these methods were unable to avoid trade-offs in feature representation (Fig. \ref{fig: feature suppression}) since features that are held constant are themselves suppressed (Prop. \ref{prop: S held constant implies suppressed}). 

To avoid this effect, we develop a technique that  \emph{adaptively} modifies samples to remove whichever features are used to discriminate a particular positive pair from negatives, then trains an encoder to discriminate instances using \emph{both} the original features, and the features left over after modification. While a natural method for modifying features is to directly transform raw input data, it is very challenging to modify the semantics of an input in this way. So instead we propose modifying features by applying transformations to  encoded samples $v=f(x)$. Since we modify the encoded samples, instead of raw inputs $x$, we describe our method as \emph{implicit}.

We set up our notation. Given batch $x,x^+,\{x^-_i\}_{i=1}^m$ we write $v=f(x)$, $v^+=f(x^+)$, and $v_i^-=f(x_i^-)$ to denote the corresponding embeddings. As in Eqn. \ref{eqn: InfoNCE loss}, the point-wise InfoNCE loss is,
\begin{equation*}
    \ell(v,v^+, \{ v^-_i\}_{i=1}^m) = - \log \frac{e^{v^\top v^+/ \tau}}{e^{v^\top v^+ / \tau} + \sum_{i=1}^m e^{v^\top v^-_i/ \tau}}.
\end{equation*}
\begin{defn}[Implicit feature modification]\label{principle: motivating adv framework} Given budget $\bm{\varepsilon} \in \mathbb{R}^m_+$, and encoder $f : \mathcal X \rightarrow \mathbb{S}^d$, an adversary removes features from $f$ that discriminates batch $x,x^+,\{x^-_i\}_{i=1}^m$ by maximizing the point-wise InfoNCE loss,
$
 \ell_{\bm{\varepsilon}}(v,v^+, \{ v^-_i \}_{i=1}^m)= \max_{\delta^+ \in \mathcal B_{\varepsilon^+}, \{\delta_i^- \in \mathcal B_{\varepsilon_i} \}_{i=1}^m} \ell(v,v^+ + \delta^+, \{ v^-_i + \delta^-_i\}_{i=1}^m)$.

\end{defn}
\vspace{-5pt}

Here $\mathcal B_\varepsilon$ denotes the $\ell_2$-ball of radius $\varepsilon$.  Implicit feature modification (IFM) removes components of the current representations that are used to discriminate positive and negative pairs. In other words, the embeddings of positive and negative samples are modified to remove well represented features. So, if the encoder is currently using a simple shortcut solution, IFM removes the features used, thereby encouraging the encoder to also discriminate instances using other features.  By applying perturbations in the embedding space IFM can modify high level semantic features (see Fig. \ref{fig: visualization}), which is extremely challenging when applying perturbations in input space.  In order to learn new features using the perturbed loss while still learning potentially complementary information using the original InfoNCE objective, we propose optimizing the the multi-task objective $\min_f \{ \mathcal L (f) + \alpha \mathcal L_{\bm{\varepsilon}} (f)\}/2$ where $\mathcal L_{\bm{\varepsilon}} = \mathbb{E}\ell_{\bm{\varepsilon}} $ is the adversarial perturbed loss, and  $\mathcal L$  the standard InfoNCE loss. For simplicity, all experiments set the balancing parameter $\alpha=1$ unless explicitly noted, and all take $\varepsilon^+, \varepsilon^-_i$ to be equal, and denote this single value by $\varepsilon$. Crucially, $\ell_{\bm{\varepsilon}}$ can be  computed analytically and efficiently. 
\begin{lemma}
For any $v,v^+, \{ v^-_i \}_{i=1}^m \in \mathbb{R}^d$ we have,
\begin{equation*}
\nabla_{v^-_j} \ell  = \frac{e^{v^\top v^-_j / \tau}}{e^{v^\top v^+/ \tau} + \sum_{i=1}^m e^{v^\top v^-_i/ \tau}} \cdot \frac{v}{\tau}  \quad  \text{and} \quad  \nabla_{v^+} \ell  =  \bigg (   \frac{e^{v^\top v^+/ \tau}}{e^{v^\top v^+/ \tau} + \sum_{i=1}^m e^{v^\top v^-_i/ \tau}} -1 \bigg ) \cdot \frac{v}{\tau} .
\end{equation*}
In particular, $\nabla_{v^-_j} \ell \propto v $ and $\nabla_{v^+} \ell \propto -v $.
\end{lemma}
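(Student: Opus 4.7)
The plan is to prove this by direct calculation; this is essentially a routine gradient computation for the softmax cross-entropy form, so no real obstacle is expected.

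First I would rewrite $\ell$ by splitting the logarithm of the quotient,
\begin{equation*}
\ell(v,v^+,\{v^-_i\}_{i=1}^m) = -\frac{v^\top v^+}{\tau} + \log\!\Bigl(e^{v^\top v^+/\tau} + \sum_{i=1}^m e^{v^\top v^-_i/\tau}\Bigr),
\end{equation*}
so that the dependence on $v^+$ and on each $v^-_j$ is cleanly exposed through a linear term and a log-sum-exp term.

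Next, treating $v$ as fixed, I would differentiate each exponential via the chain rule, using $\nabla_{v^-_j} e^{v^\top v^-_j/\tau} = e^{v^\top v^-_j/\tau}\,v/\tau$ and similarly for $v^+$. For $v^-_j$ only one term in the sum depends on it, yielding
\begin{equation*}
\nabla_{v^-_j}\ell = \frac{e^{v^\top v^-_j/\tau}}{e^{v^\top v^+/\tau} + \sum_{i=1}^m e^{v^\top v^-_i/\tau}}\cdot \frac{v}{\tau},
\end{equation*}
which matches the stated expression. For $v^+$ I would combine the contribution $-v/\tau$ from the linear term with the contribution $e^{v^\top v^+/\tau}\,v/\tau$ divided by the log-sum-exp denominator, and factor out $v/\tau$ to obtain the bracketed softmax-minus-one form in the statement.

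Finally, the proportionality claims $\nabla_{v^-_j}\ell \propto v$ and $\nabla_{v^+}\ell \propto -v$ follow immediately, since the prefactors are nonnegative scalars and, in the $v^+$ case, the softmax weight is strictly less than one (so the prefactor is negative, giving the minus sign). There is no analytic subtlety here; the only mild care needed is to note that the denominator is strictly positive so all expressions are well-defined.
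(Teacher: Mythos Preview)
Your proposal is correct and follows essentially the same direct-computation approach as the paper: split the log-quotient into a linear term and a log-sum-exp, differentiate each, and read off the proportionality from the sign of the scalar prefactor. There is no meaningful difference between your argument and the paper's.
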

\vspace{-5pt}
 This expression shows that the adversary perturbs $v^-_j$ (resp. $v^+$) in the direction of the anchor $v$ (resp $-v$). Since the derivative directions are \emph{independent} of $\{v_i^-\}_{i=1}^m$ and $v^+$, we can analytically compute optimal perturbations in $\mathcal B_\varepsilon$. Indeed, following the constant ascent direction shows the optimal updates are simply $v^-_i \leftarrow v^-_i + \varepsilon_i   v  $ and $v^+ \leftarrow v^+ - \varepsilon^+   v $.  The positive (resp. negative) perturbations increase (resp. decrease) cosine similarity to the anchor $\text{sim}(v, v^-_i + \varepsilon_i   v) \rightarrow 1$ as $\varepsilon_i \rightarrow \infty$ (resp. $\text{sim}(v, v^+ - \varepsilon^+  v) \rightarrow -1$ as $\varepsilon^+ \rightarrow \infty$). In Fig. \ref{fig: visualization} we visualize the newly synthesized $v^-_i,v^+$ and find meaningful interpolation of semantics. Plugging the update rules for $v^+$ and $v_i^-$ into the point-wise InfoNCE loss yields,
\begin{equation}
 \ell_{\bm{\varepsilon}}(v,v^+, \{ v^-_i\}_{i=1}^m) = - \log \frac{e^{(v^\top v^+ - \varepsilon^+)/\tau}}{e^{(v^\top v^+ - \varepsilon^+)/\tau} + \sum_{i=1}^m e^{(v^\top v^-_i + \varepsilon_i)/\tau}}.
\label{eqn: attack both loss}
\end{equation}
In other words, IFM amounts to simply perturbing the logits -- reduce the positive logit by $\varepsilon^+/\tau$ and increase negative logits by $\varepsilon_i/\tau$. From this we see that $\ell_{\bm{\varepsilon}}$ is automatically symmetrized in the positive samples: perturbing $v$ instead of $v^+$ results in the exact same objective.  Eqn. \ref{eqn: attack both loss} shows that IFM re-weights each negative sample  by a factor  $e^{\varepsilon_i/\tau}$ and positive samples by $e^{-\varepsilon^+/\tau}$. 

\vspace{-5pt}
\subsection{Visualizing implicit feature modification}\label{sec: vis}
\vspace{-5pt}

With implicit feature modification, newly synthesized data points do not directly correspond to any ``true'' input data point.
However it is still possible to visualize the effects of implicit feature modification. To do this, assume access to a memory bank of input data $\mathcal M=\{x_i\}_i$. A newly synthesized sample $s$ can be approximately visualized by retrieving the 1-nearest neighbour using cosine similarity $\arg\min_{x \in \mathcal M} \text{sim}( s,  f(x))$ and viewing the image $x$ as an approximation to $s$. 

Fig. \ref{fig: visualization} shows results using a  ResNet-50 encoder trained using MoCo-v2 on ImageNet1K using the training set as the memory bank. For positive pair $v,v^+$ increasing $\varepsilon$ causes the semantics of $v$ and $v^+$ to diverge. For $\varepsilon=0.1$ a different car with similar pose and color is generated, for $\varepsilon=0.2$ the pose and color then changes, and finally for $\varepsilon=1$ the pose, color and type of vehicle changes. For negative pair $v,v^-$ the reverse occurs.  For $\varepsilon=0.1$, $v^-$ is a vehicle with similar characteristics (number of windows, color etc.), and with $\varepsilon=0.2$, the pose of the vehicle $v^+$ aligns with $v$. Finally for $\varepsilon=1$ the pose and color of the perturbed negative sample become aligned to the anchor $v$. In summary, implicit feature modification successfully \emph{modifies the feature content in positive and negative samples}, thereby altering which features can be used to  discriminate instances. 

\vspace{-5pt}

\paragraph{Related Work.} Several  works consider adversarial contrastive learning  \cite{ho2020contrastive,jiang2020robust,kim2020adversarial}  using PGD (e.g. FGSM) attacks to alter samples in input space. Unlike our approach, PGD-based attacks require costly inner-loop optimization.  Other work takes an adversarial viewpoint in input space for other self-supervised tasks e.g., rotations and jigsaws but uses an image-to-image network to simulate FGSM/PGD attacks \cite{minderer2020automatic}, introducing comparable computation overheads. They note that low-level (i.e., pixel-level) shortcuts can be avoided using their method. All of these works differ from ours by applying attacks in input space, thereby focusing on lower-level features, whereas ours aims to modify high-level features.  Fig. \ref{fig: simadv acl comparison} compares IFM to this family of input-space adversarial methods by  comparing to a top performing method ACL(DS) \cite{jiang2020robust}. We find that ACL improves robust accuracy under $\ell_\infty$-attack on input space (see \cite{jiang2020robust} for protocol details), whereas IFM improves standard accuracy (full details and discussion in Appdx. \ref{appendix: acl comparison}).  Synthesizing harder negatives in latent space using Mixup \cite{zhang2017mixup} has also been considered \cite{kalantidis2020hard} but does not take an adversarial perspective. Other work, AdCo \cite{hu2020adco},  also takes an adversarial viewpoint in latent space. There are several differences to our approach. 
AdCo perturbs all negatives using the same weighted combination of all the queries, whereas IFM perturbations are query specific.
In other words, IFM makes instance discrimination harder point-wise, whereas AdCo perturbation makes the InfoNCE loss larger \emph{on average} (see  Fig. \ref{fig: visualization} for visualizations of instance dependent perturbation using IFM). AdCo also treats the negatives as learnable parameters, introducing $\sim 1M$ more parameters and $\sim 7\%$ computational overhead, while IFM has no computational overhead and is implemented with only two lines of code (see Tab. \ref{tab: imagenet100} for empirical comparison). Finally, no previous work makes the connection between suppression of semantic features and adversarial methods in contrastive learning (see Fig. \ref{fig: simadv suppression}).

\vspace{-10pt}
\section{Experimental results}\label{sec: experiments}
\vspace{-10pt}

 \begin{figure}[t] %{6.5cm}
 \begin{minipage}[t]{0.67\textwidth}
  \includegraphics[width=0.96\textwidth]{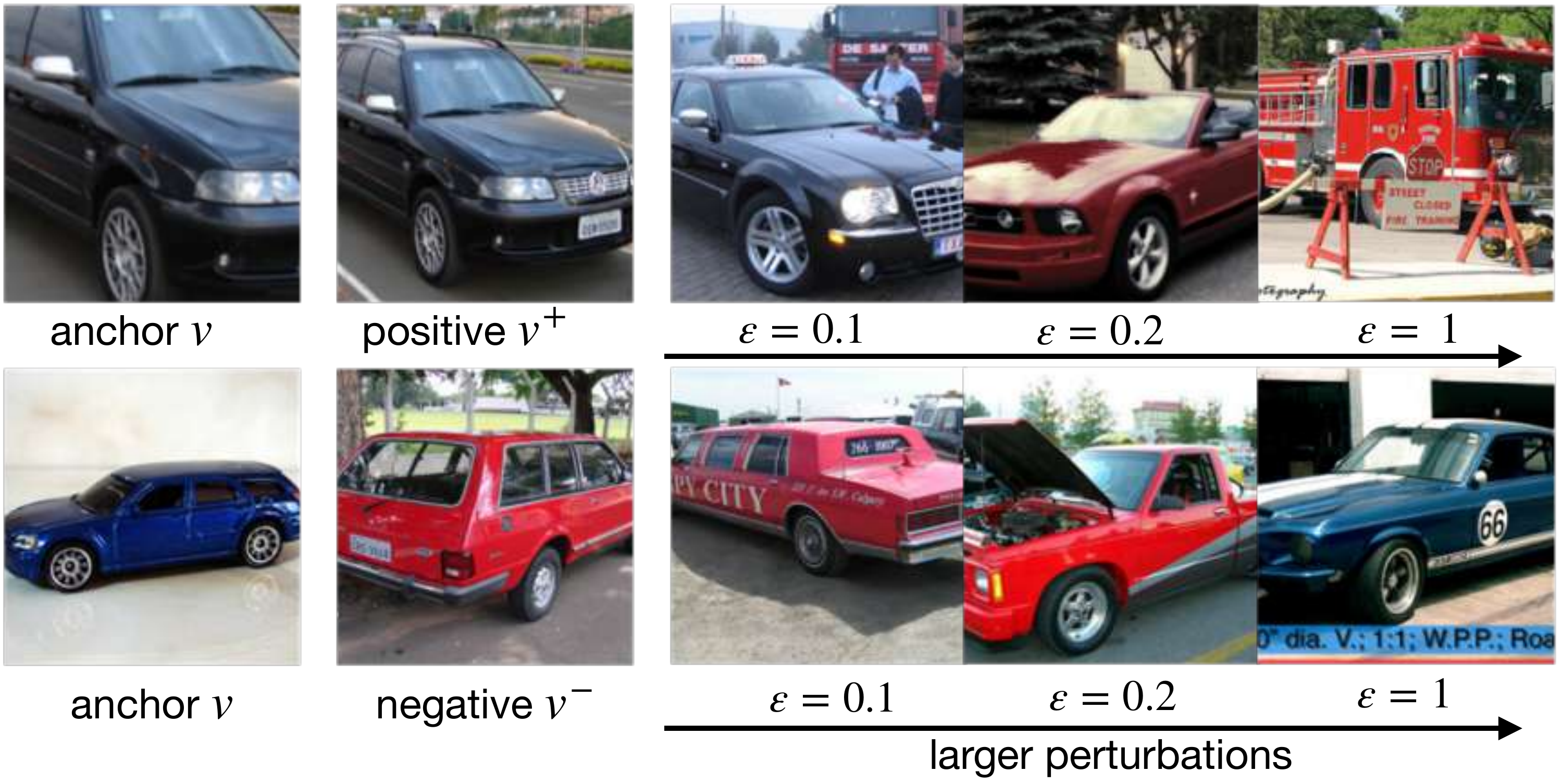}
   \vspace{-6pt}
  \caption{Visualizing implicit feature modification. \textbf{Top row:} progressively moving positive sample away from anchor. \textbf{Bottom row:} progressively moving negative sample away from anchor. In both cases, semantics such as color, orientation, and vehicle type are modified, showing the suitability of implicit feature modification for altering instance discrimination tasks.}
  \label{fig: visualization}
  \end{minipage}\hfill
    \begin{minipage}[t]{0.32\textwidth}
\includegraphics[width=0.96\textwidth]{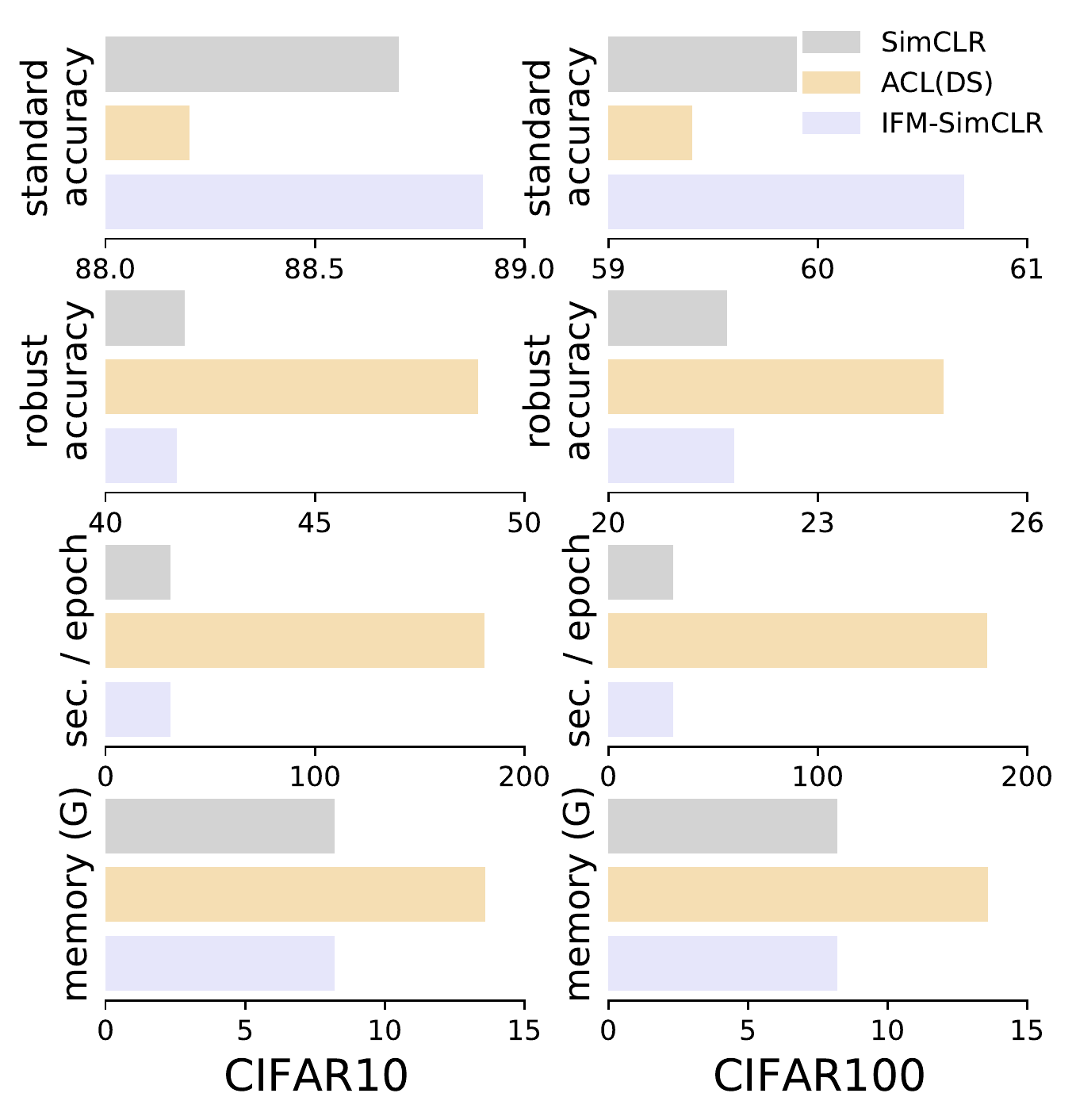}
\begin{adjustwidth}{7pt}{7pt}
 \vspace{-6pt}
\caption{Comparison between IFM and ACL(DS). Under standard linear evaluation IFM performs best. ACL is suited to adversarial evaluation.}
\label{fig: simadv acl comparison}
\end{adjustwidth}
\end{minipage}
\vspace{-12pt}
\end{figure}

 \begin{figure}[t] %{6.5cm}
  \begin{center}
    \includegraphics[width=0.32\textwidth]{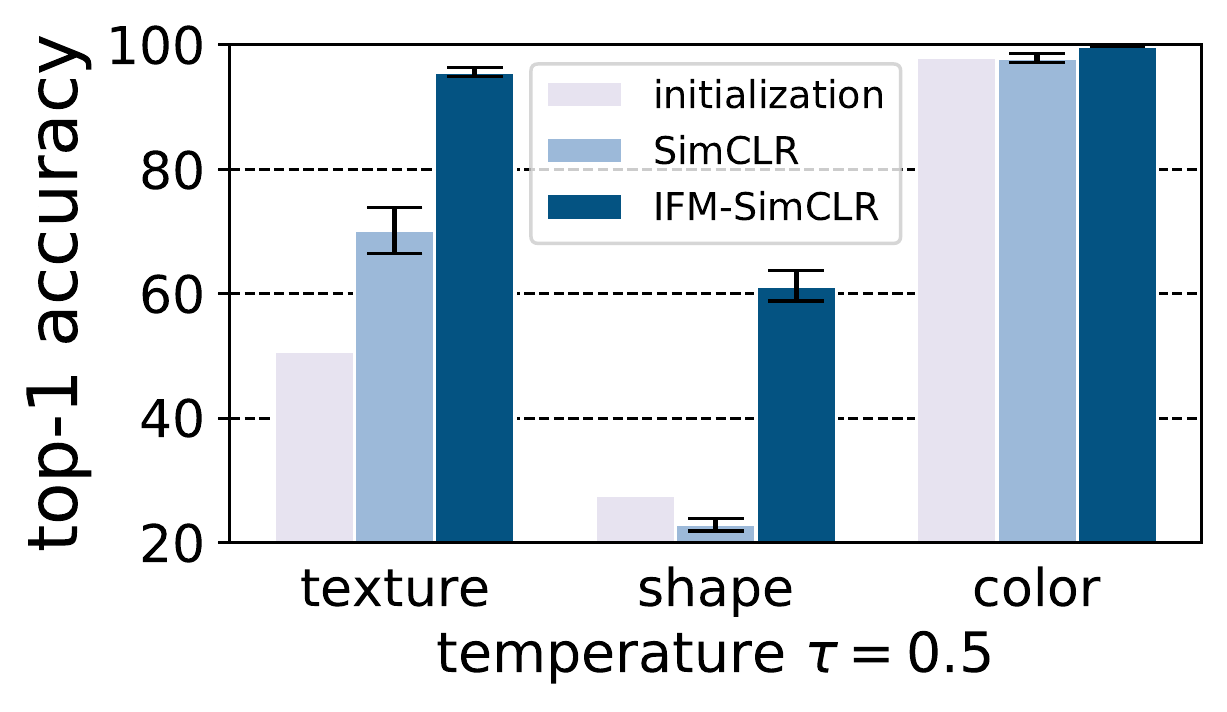}
    \includegraphics[width=0.32\textwidth]{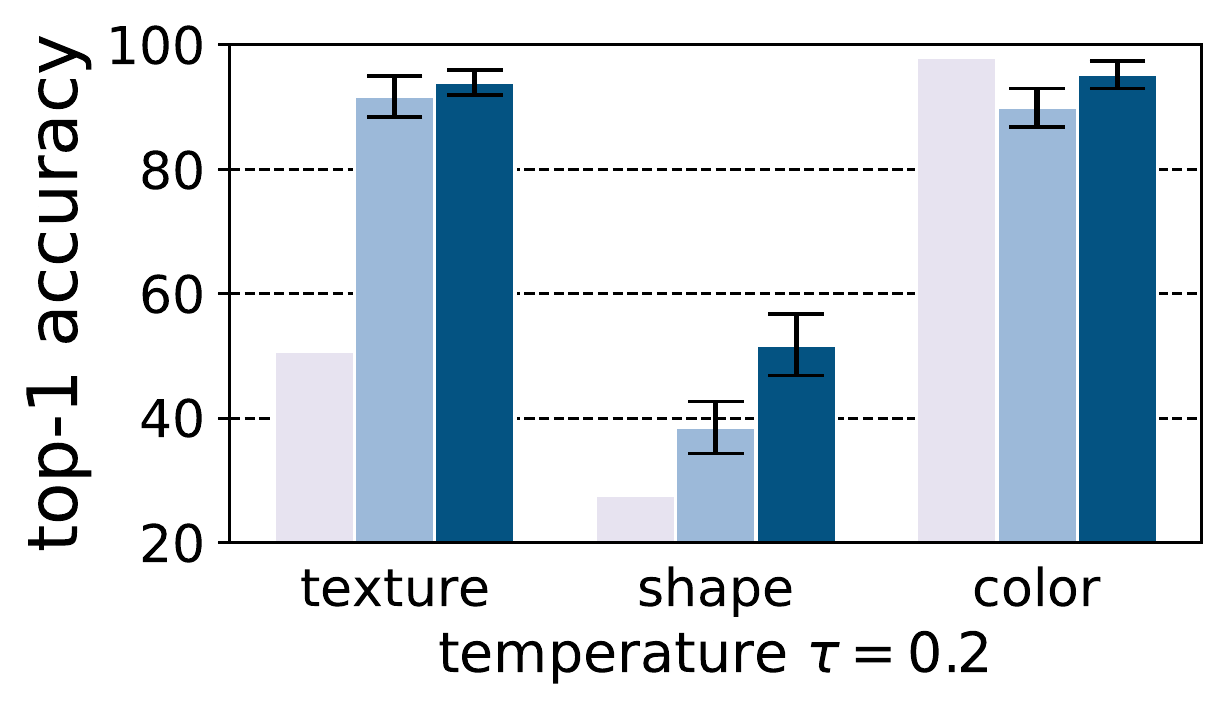}
    \includegraphics[width=0.32\textwidth]{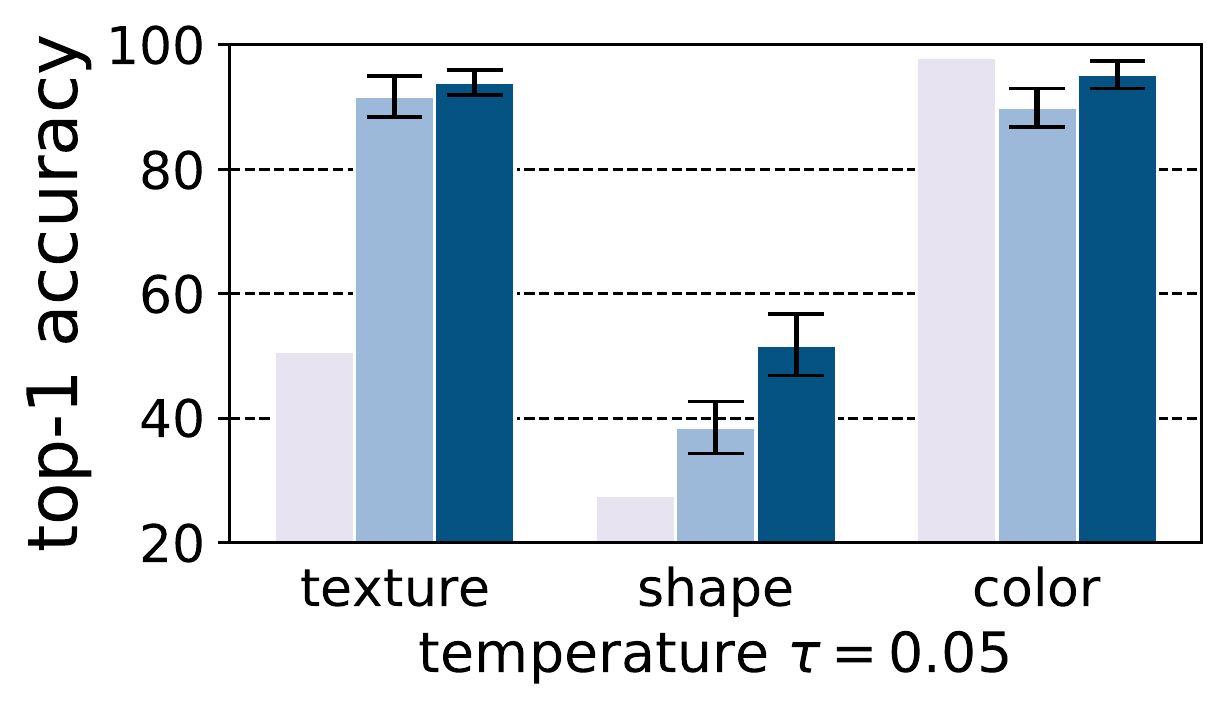}  
      \vspace{-5pt}
    \caption{\textbf{Trifeature dataset.} Implicit feature modification reduces feature suppression, enhancing the representation of texture, shape and color features simultaneously. All results  are average linear readout accuracy over three seeds and use a fixed value $\varepsilon=0.1$ to illustrate robustness to $\varepsilon$. }
    \label{fig: simadv suppression}
  \end{center}
  \vspace{-15pt}
\end{figure}

Implicit feature modification (IFM) can be used with any InfoNCE-based contrastive framework,  and we write IFM-SimCLR, IFM-MoCo-v2 etc. to denote IFM applied within a specific framework. Code for IFM will be released publicly, and is also available in the supplementary material.
\vspace{-5pt}

\subsection{Does implicit feature modification help avoid feature suppression?}\label{expt: visualization}
\vspace{-5pt}

We study the effect IFM has on feature suppression by training ResNet-18 encoders for 200 epochs with $\tau\in \{0.05, 0.2, 0.5\}$ on the Trifeature dataset  \cite{hermann2020shapes}. Results are averaged over three seeds, with IFM using $\varepsilon=0.1$ for simplicity. Fig. \ref{fig: simadv suppression} shows that IFM improves the linear readout accuracy across \emph{all} three features for all temperature settings. 
The capability of IFM to enhance the representation of all features -- i.e. reduce reliance on shortcut solutions -- is an important contrast with tuning temperature $\tau$ or using hard negatives, which  Fig. \ref{fig: feature suppression} shows only \emph{trades-off} which features are learned. 

\vspace{-5pt}
\subsection{Performance on downstream tasks}\label{sec: vis rep}
\vspace{-5pt}

Sec. \ref{sec: vis} and Sec. \ref{expt: visualization} demonstrate that implicit feature modification is adept at altering high-level features of an input, and combats feature suppression. This section shows that these desirable traits translate into improved performance on object classification and medical imaging tasks.
\vspace{-5pt}

\paragraph{Experimental setup for classification tasks.} Having observed the positive effect IFM has on feature suppression, we next test if this feeds through to improved performance on real tasks of interest.  We benchmark using both SimCLR and MoCo-v2 \cite{chen2020simple,chen2020mocov2} with standard data augmentation \cite{chen2020simple}. All encoders have ResNet-50 backbones and are trained for 400 epochs (with the exception of on ImageNet100, which is trained for 200 epochs). All encoders are evaluated using the test accuracy of a linear classifier trained on the full training dataset (see Appdx. \ref{appdx: Object classification experimental setup details} for full setup details).
\vspace{-5pt}

\begin{wraptable}{r}{10.0cm} %6.8cm
 \vspace{-10pt}
\begin{tabular}{cccccc}
\toprule  
-- & MoCo-v2 & AdCo \cite{hu2020adco} & \multicolumn{3}{c}{IFM-MoCo-v2}
 \\
    \cmidrule(lr){1-1} \cmidrule(lr){2-2} \cmidrule(lr){3-3} 
 \cmidrule(lr){4-6}
$\varepsilon$ & N/A & N/A & $0.05$ & $0.1$ & $0.2$ \\
\toprule  
top-1 & ${80.4}_{\pm 0.11}$ & ${78.9}_{\pm 0.21}$ & $\textbf{81.1}_{\pm 0.02}$ & $80.9_{\pm 0.25}$ & ${80.7}_{\pm 0.13}$ \\
%top-5 & 95.6\% & 94.5\% & \textbf{95.8\%} & 95.5\%  & 95.5\% \\
\bottomrule
\end{tabular}
\caption{Linear readout ($\%$) on ImageNet100, averaged over five seeds. IFM improves over MoCo-v2 for all settings of $\varepsilon$.}
 \vspace{-10pt}
 \label{tab: imagenet100}
\end{wraptable}

\vspace{-4pt}

\paragraph{Classification tasks.} Results given in Fig. \ref{fig: visual lin readout} and Tab. \ref{tab: imagenet100} find that every value of $0 < \varepsilon \leq 0.2$ \emph{improves performance across all datasets using both MoCo-v2 and SimCLR frameworks}. We find that optimizing $\mathcal L_\varepsilon$  ($76.0\%$ average score across all eight runs in Fig. \ref{fig: visual lin readout}) performs similarly to the standard contrastive loss ($75.9\%$ average score), and does worse than the IFM loss $(\mathcal L+ \mathcal L_\varepsilon)/2$. This suggests that  $\mathcal L$ and  $\mathcal L_\varepsilon$ learn  complementary features. Tab. \ref{tab: imagenet100} benchmarks IFM on ImageNet100 \cite{tian2019contrastive} using MoCo-v2, observing improvements of $0.9\%$. We also compare results on ImageNet100 to AdCo \cite{hu2020adco}, another adversarial method for contrastive learning. We adopt the official code and use the exact same training and finetuning hyperparameters as for MoCo-v2 and IFM. For the AdCo-specific hyperparamters -- negatives learning rate $lr_\text{neg}$ and negatives temperature $\tau_\text{neg}$ -- we use a grid search  over all combinations $lr_\text{neg} \in \{ 1,2,3,4\}$ and $\tau_\text{neg} \in \{ 0.02,0.1\}$, which includes the AdCo default ImageNet1K recommendations $lr_\text{neg} =3$ and $\tau_\text{neg}=0.02$ \cite{hu2020adco}. The resulting AdCo performance of $78.9\%$ is slightly below MoCo-v2. However using their respective ImageNet1K default parameters AdCo and MoCo-v2 achieve $72.4\%$ and $71.8\%$ respectively, suggesting that the discrepancy between AdCo and MoCo-v2 may in part be  due to the use of improved hyperparameters tuned on MoCo-v2. Note importantly, IFM is robust to the choice of $\varepsilon$: all values $\varepsilon \in \{0.05,0.1,0.2\}$ were found to boost performance across all datasets and all frameworks. We emphasize that the MoCo-v2 baseline performance of $80.5\%$ on ImageNet100 is strong. Our hyperparameters, which we detail in  Appdx. \ref{sec: MoCo-v2 for ImageNet100}, may be of  interest to other works benchmarking MoCo-v2 on ImageNet100.
\vspace{-5pt}

 \begin{figure}[t] %{6.5cm}
  \includegraphics[width=\textwidth]{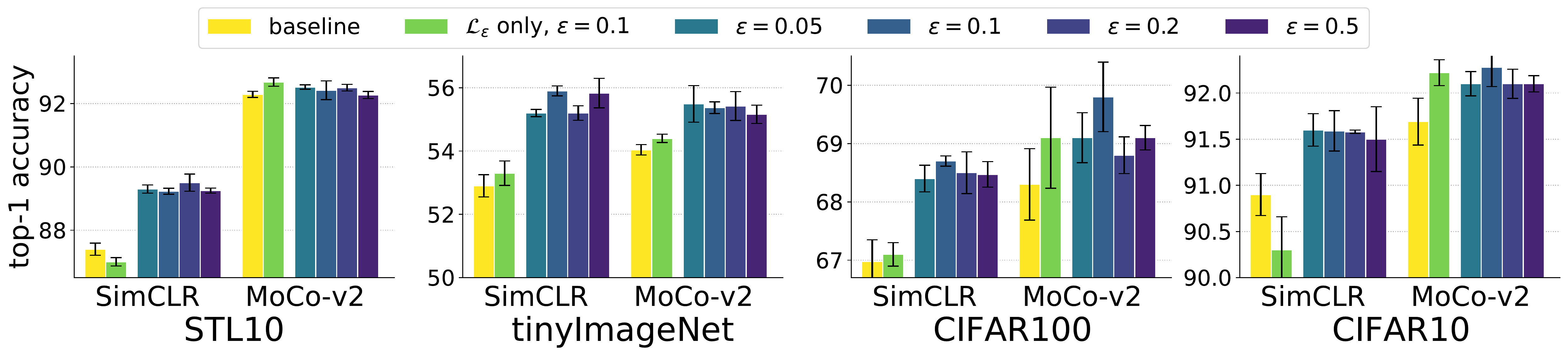}
  \vspace{-15pt}
     \caption{IFM improves linear readout performance on all datasets for all $\varepsilon \in \{0.05,0.1,0.2\}$ compared to  baselines. Protocol uses $400$ epochs of training with ResNet-50 backbone. }
      \label{fig: visual lin readout}
       \vspace{-15pt}
\end{figure}

\vspace{-4pt}
\paragraph{Medical images.}

To evaluate our method on a modality differing significantly from object-based images we consider the task of learning representations of medical images. We benchmark using the approach proposed by~\cite{sun2020context} which is a variant of MoCo-v2 that incorporates the anatomical context in the medical images. 
We evaluate our method on the COPDGene dataset~\cite{regan2011genetic}, which is a multi-center observational study focused on the genetic epidemiology of Chronic obstructive pulmonary disease (COPD). See Appdx. \ref{appdx: COPD} for full background details on the COPDGene dataset, the five  COPD related outcomes we use for evaluation, and our implementation. We perform regression analysis for continuous outcomes in terms of coefficient of determination (R-square), and logistic regression to predict ordinal outcomes and report the classification accuracy and the \textit{1-off} accuracy, i.e., the probability of the predicted category is within one class of true value.

 \begin{table*}[t]
 \begin{adjustbox}{max width=\textwidth}
   \centering
 %\vspace{-3mm}
  \begin{tabular}{lcccccccc}
  \toprule
    Method&
    $\log$\texttt{FEV1pp}&
    $\log$\texttt{$\text{FEV}_1\text{FVC}$}&
    CLE&
    CLE \textit{1-off}&
    Para-septal&
    Para-septal \textit{1-off}&
    mMRC&
    mMRC \textit{1-off}\\
  \cmidrule(lr){1-1} \cmidrule(lr){2-3} 
 \cmidrule(lr){4-9}
   Loss& \multicolumn{2}{c}{R-Square}& \multicolumn{6}{c}{ Accuracy (\%)}  
    \\
  \cmidrule(lr){1-1} \cmidrule(lr){2-3} 
 \cmidrule(lr){4-9}
 $\mathcal{L}$ (baseline) &$0.566_{\pm.005}$&$0.661_{\pm.005}$&$49.6_{\pm0.4}$&$81.8_{\pm0.5}$&$55.7_{\pm0.3}$&$84.4_{\pm0.2}$&$50.4_{\pm0.5}$&$72.5_{\pm0.3}$\\
  $\mathcal{L}_\varepsilon$, $\varepsilon=0.1$ &$0.591_{\pm.008}$&$0.681_{\pm.008}$&$49.4_{\pm0.4}$&$81.9_{\pm0.3}$&${55.6_{\pm0.3}}$&$85.1_{\pm0.2}$&$50.3_{\pm0.8}$&$72.7_{\pm0.4}$\\
   IFM, $\varepsilon=0.1$ &$\bm{0.615_{\pm.005}}$&$\bm{0.691_{\pm.006}}$&$48.2_{\pm0.8}$&$80.6_{\pm0.4}$&$55.3_{\pm0.4}$&$84.7_{\pm0.3}$&$50.4_{\pm0.5}$&$72.8_{\pm0.2}$\\
   IFM, $\varepsilon=0.2$ &$0.595_{\pm.006}$&${0.683_{\pm.006}}$&$48.5_{\pm0.6}$&${80.5_{\pm0.6}}$&$55.3_{\pm0.3}$&${85.1_{\pm0.1}}$&${49.8_{\pm0.8}}$&${72.0_{\pm0.3}}$\\
   IFM, $\varepsilon=0.5$ &${0.607_{\pm.006}}$&$0.683_{\pm.005}$&${49.6_{\pm0.4}}$&$82.0_{\pm0.3}$&$54.9_{\pm0.2}$&$84.7_{\pm0.2}$&$\bm{50.6_{\pm0.4}}$&$\bm{73.1_{\pm0.2}}$\\
   IFM, $\varepsilon=1.0$ &$0.583_{\pm.005}$&$0.675_{\pm.006}$&$\bm{50.0_{\pm0.5}}$&$\bm{82.9_{\pm0.4}}$&$\bm{56.3_{\pm0.6}}$&$\bm{85.7_{\pm0.2}}$&$50.3_{\pm0.6}$&$71.9_{\pm0.3}$\\
  %\vspace{-3mm}
   \bottomrule
   \end{tabular}
   \end{adjustbox}
   \caption{Linear readout performance on COPDGene dataset. The values are the average of 5-fold cross validation with standard deviations. The bold face indicates the best average performance. IFM yields improvements on all phenotype predictions.}
    \label{tbl:copd_epoch_10}
   \vspace{-10pt}
 \end{table*}

Tab.~\ref{tbl:copd_epoch_10} reports results. For fair comparison we use  same experimental configuration for the baseline approach~\cite{sun2020context} and our method. We find that IFM yields improvements on all outcome predictions. The gain is largest on spirometry outcome prediction, particularly $\text{logFEV1pp}$ with improvement of $8.7\%$ with $\varepsilon=0.1$.
%Although other settings of $\varepsilon$ achieve optimal performance on certain downstream tasks, 
We found that at least $\varepsilon=0.5$ and $1.0$ improve performance on all tasks. However, we note that not all features yield a statistically significant improvement with IFM.

 \begin{figure}[t] %{6.5cm}
  \includegraphics[width=\textwidth]{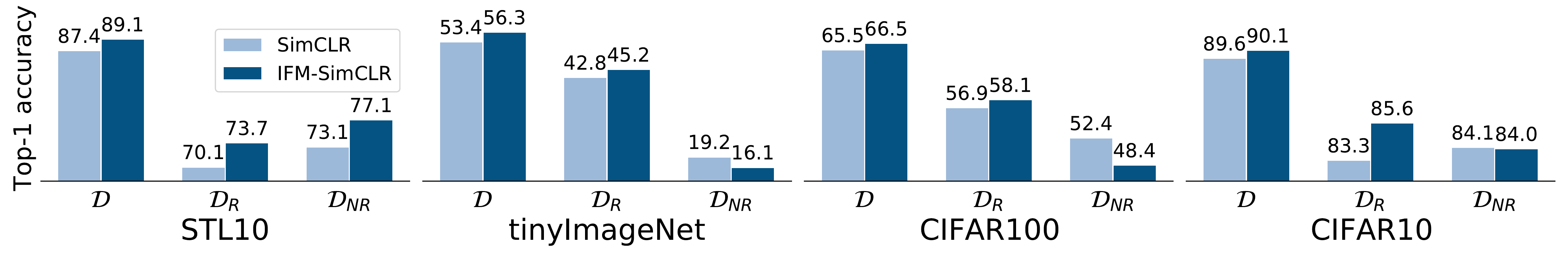}
  \vspace{-20pt}
     \caption{Label $\{\mathcal D, \mathcal D_\text{R},\mathcal D_\text{NR} \}$ indicates which dataset was used to train the linear readout function. Improved performance of IFM on standard data $\cal D$ can be attributed to improved representation of \emph{robust} features $\mathcal D_\text{R}$. See Sec. \ref{sec: robustnes} for construction of robust ($\mathcal D_\text{R}$) and non-robust ($\mathcal D_\text{NR}$) datasets. }
      \label{fig: robust features}
       \vspace{-10pt}
\end{figure}

\vspace{-3pt}
\subsection{Further study on the impact of IFM on feature learning}\label{sec: robustnes}
\vspace{-10pt}

This section further studies the effect implicit feature modification has on \emph{what type} of features are extracted. Specifically, we consider the impact on learning of robust (higher-level) vs. non-robust features (pixel-level features). Our methodology, which is similar to that of Ilyas et al. \cite{ilyas2019adversarial} for deep supervised learning, involves carefully perturbing inputs to obtain non-robust features.

\vspace{-4pt}

\paragraph{Constructing non-robust features.} Given encoder $f$ we finetune a linear probe (classifier) $h$ on-top of $f$ using training data (we do not use data augmentation). Once $h$ is trained, we consider each labeled example $(x,y)$ from training data $\mathcal D_\text{train} \in \{ \text{tinyImageNet, STL10, CIFAR10, CIFAR100}\}$. A hallucinated target label $t$ is sampled uniformly at random, and we perturb $x=x_0$ until $h \circ f$ predicts $t$ using   repeated FGSM attacks  \cite{goodfellow2014explaining} $x_k \leftarrow x_{k-1} -\varepsilon \text{sign}( \nabla_x \ell( h \circ f (x_{k-1}), t))$. At each step we check if $\arg \max_i h \circ f(x_k)_i = t$ (we use the maximum of logits for inference) and stop iterating and set $x_\text{adv} = x_k$ for the first $k$ for which the prediction is $t$. This usually takes no more than a few FGSM steps with $\varepsilon=0.01$. We form a dataset of ``robust'' features by adding $(x_\text{adv},y)$ to $\mathcal D_R$, and a dataset of ``non-robust'' features by adding $(x_\text{adv},t)$ to $\mathcal D_{NR}$. To a human the pair $(x_\text{adv},t)$  will look mislabeled, but for the encoder $x_\text{adv}$ contains features predictive of $t$. Finally, we re-finetune (i.e. re-train)  linear classifier $g$ using $\mathcal D_{R}$ (resp. $\mathcal D_{NR}$).

 Fig. \ref{fig: robust features} compares accuracy of the re-finetuned models on a test set of \emph{standard} $\mathcal D_\text{test}$ examples (no perturbations are applied to the test set). Note that $\mathcal D_{R}$, $\mathcal D_{NR}$ depend on the original encoder $f$. When re-finetuning $f$ we always use datasets $\mathcal D_{R}$, $\mathcal D_{NR}$ formed via FGSM attacks on $f$ itself. So there is one set $\mathcal D_{R},\mathcal D_{NR}$ for SimCLR, and another set for IFM.  
 Fig. \ref{fig: robust features} shows that IFM achieves superior generalization ($\mathcal D$) compared to SimCLR \emph{by better representing robust features} ($\mathcal D_{R}$). Representation of non-robust  features ($\mathcal D_{NR}$) is similar for  IFM ($55.5\%$ average across all datasets) and SimCLR ($56.7\%$ average). IFM is juxtaposed to the supervised adversarial training of Madry et al., which \emph{sacrifices} standard supervised performance in exchange for not using non-robust features \cite{madry2017towards, tsipras2018robustness}.

\vspace{-10pt}
\section{Discussion}\label{sec: discussion}
\vspace{-7pt}

This work studies the relation between contrastive instance discrimination and feature learning. While we focus specifically on contrastive learning, it would be of interest to also study feature learning for other  empirically successful self-supervised methods \cite{bardes2021VICReg,chen2020exploring,grill2020bootstrap,zbontar2021barlow}. Understanding differences in feature learning biases between different methods may inform which methods are best suited for a given task, as well  as point the way to further improved self-supervised techniques.

\paragraph{Acknowledgments} SJ was supported by NSF BIGDATA award IIS-1741341, NSF Convergence Accelerator Track D 2040636. SS acknowledges support from NSF-TRIPODS+X:RES (1839258). JR was partially supported by a Two Sigma fellowship. KB acknowledges support from NIH (1R01HL141813-01), NSF (1839332 Tripod+X), and a research grant from SAP SE Commonwealth Universal Research Enhancement (CURE) program awards research grants from the Pennsylvania Department of Health. Finally, we warmly thank Katherine Hermann and Andrew Lampinen for making the Trifeature dataset available for our use.

{
  \bibliographystyle{plainnat}
  \bibliography{egbib}

}

\vfill
\pagebreak
\appendix

\section{Proofs for Section \ref{sec: task design}}\label{appx: proofs}

In this section we give proofs for all the results in Sec. \ref{sec: task design}, which explores the phenomenon of feature suppression in contrastive learning using the InfoNCE loss. We invite the reader to consult Sec. \ref{sec: notarion setup} for details on any notation, terminology, or formulation details we use. 

Recall, for a measure $\nu$ on a space $\cal U$ and a measurable map $h : \cal U \rightarrow \cal V$ let $h \# \nu$ denote the \emph{pushforward} $h \# \nu(V)=\nu(h^{-1}(V))$ of a  measure $\nu$ on a space $\cal U$ for a measurable map $h : \cal U \rightarrow \cal V$ and  measurable $V \subseteq \cal V$, where $h^{-1}(V)$ denotes the preimage. We now recall the definition of feature suppression and distinction.

\begin{manualdef}{1}\label{def: appx def}
Consider an encoder $f :\mathcal X \rightarrow  \mathbb{S}^{d-1}$ and features $j \subseteq [n]$. For each $z^j \in %\in
\mathcal Z^S$, let $\mu (\cdot| z^S) = (f\circ g) \# \lambda (\cdot| z^j) $ be the pushforward measure  on $\mathbb{S}^{d-1}$ by $f\circ g$ of the conditional $\lambda (\cdot| z^S)$. 
\begin{enumerate}
    \vspace*{-7pt}   
    \setlength{\itemsep}{0pt}
    \item  $f$ \emph{suppresses} $S$ if for any pair $z^S , \bar{z}^S\in \mathcal Z^S$, we have $\mu (\cdot| z^S) =  \mu (\cdot| {\bar{z}^S})$. 
    %\vspace{-3pt}
    \item  $f$ \emph{distinguishes} $S$ if for any pair of distinct $z^S, \bar{z}^S\in \mathcal Z^S$, measures $\mu (\cdot| z^S), \mu (\cdot| {\bar{z}^S})$ have disjoint support. 
    \vspace{-5pt}
\end{enumerate}
\end{manualdef}

Suppression of features $S$ is thereby captured by the characteristic of distributing points in the same way on the sphere independently of what value $z^S$ takes. Feature distinction, meanwhile, is characterized by being able to partition the sphere into different pieces, each corresponding to a different value of $z^S$. Other (perhaps weaker) notions of feature distinction may be useful in other contexts. However here our goal is to establish that it is possible for InfoNCE optimal encoders both to suppress features in  the sense of Def. \ref{def: appx def}, but also to separate concepts out in a desirable manner. For this purpose we found this strong notion of distinguishing to suffice.  

Before stating and proving the result, recall the limiting InfoNCE loss that we analyze,
\vspace{-14pt}

\begin{equation*}
\mathcal L = \lim_{m \rightarrow \infty } \big \{ \mathcal L_m(f) - \log m  - \tfrac2{\tau}\big \} = \tfrac{1}{2\tau}\mathbb{E}_{x,x^+}  \| f(x) - f(x^+) \|^2 + \mathbb{E}_{x^+ } \log \big [  \mathbb{E}_{x^-} e^{f(x^+)^\top f(x^-)/\tau} \big ]. %$
\end{equation*}
%\vspace{-14pt}
We subtract $\log m$ to ensure the limit is finite, and use $x^-$ to denote a random sample with the same distribution as $x^-_i$. Following \cite{wang2020understanding} we denote the first term by $\mathcal L_{\text{align}}$ and the second term by the ``uniformity loss'' $\mathcal L_{\text{unif}}$, so $ \mathcal L=\mathcal L_{\text{align}}+\mathcal L_{\text{unif}}$. 

\iffalse
Recall that we use $\sigma_d$ denote the uniform distribution on $\mathbb{S}^{d-1}$. For any  $A, B \in \mathcal M(\mathbb{S}^{d-1}_\tau)$ let $\sigma_d(B|A) = \sigma_d( A \cap B) / \sigma_d(A)$ denote the probability of event $B$ under $\sigma_d$ conditioned on $A$.
\fi

\begin{manualprop}{1} Suppose that $p_j$ is uniform on $\mathcal Z^j = \mathbb{S}^{d-1}$. For any feature $j \in [n]$ there exists an encoder $f_\text{supp}$ that suppresses feature $j$ and encoder $f_\text{disc}$ that discriminates $j$ but both attain $\min_{f: \text{ measurable}}\mathcal L(f)$.
\end{manualprop}

\begin{proof}
The existence of the encoders $f_\text{supp}$ and $f_\text{disc}$  is demonstrated by constructing explicit examples. Before defining $f_\text{supp}$ and $f_\text{disc}$ themselves, we begin by constructing a family $\{f^k\}_{k\in [n]}$ of optimal encoders.

Since $g$ is injective, we know there exists a left inverse $h : \mathcal X \rightarrow \mathcal Z$ such that $h \circ g (z) = z$ for all $z \in \mathcal Z$. For any $k \in [n]$ let $\Pi^k : \mathcal Z \rightarrow \mathbb{S}^{d-1}$ denote the projection $\Pi^k(z)=z^k$. Since $p_k$ is uniform on the sphere $\mathbb{S}^{d-1}$, we know that $\Pi^k \circ h \circ g (z)= z^j$ is uniformly distributed on  $\mathbb{S}^{d-1}$. Next we partition the space $\mathcal X$. Since we assume that for all $a \neq a'$ and $z \neq z'$ that $a(z) \neq a'(z')$, the family $\{ \mathcal X_z\}_{z \in \mathcal Z}$ where $\mathcal X_z = \{ a\circ g(z) : z \in \mathcal Z\}$ is guaranteed to be a partition (and in particular, disjoint). We may therefore define an encoder $f_k: \mathcal X \rightarrow \mathbb{S}^{d-1}$ to be equal to $f_k(x) = \Pi^k  \circ h \circ g (z) = z^k$ for all $x \in \mathcal X_z$. 

First we check that this $f_k$ is optimal. Since for any $z$, and any $a \sim \mathcal A$, by definition we have $a\circ g(z) \in \mathcal X_z $, we have that $f_k(x) = f_k ( a(x))$ almost surely, so $\mathcal L_\text{align} (f_k) = 0$ is minimized. To show $f_k$ minimizes $\mathcal L_\text{unif}$ note that the uniformity loss can be re-written as 
\vspace{-10pt}

\begin{align*}
    \mathcal L_{\text{unif}}(f_k) &= \int_{a} \int_{z } \log \int_{a^- } \int_{z^-} e^{f_k\circ a(g(z)) ^\top f_k \circ a^-(g(z^-))/\tau} \lambda(\text{d}z)\lambda(\text{d}z^-) \mathcal A(\text{d}a)\mathcal A(\text{d}a^-)  \\
     &=  \int_{z } \log  \int_{z^-} e^{f_k\circ g(z) ^\top f_k \circ g(z^-)/\tau} \lambda(\text{d}z)\lambda(\text{d}z^-)  \\
    & = \int_{\mathbb{S}^{d-1}} \log  \int_{\mathbb{S}^{d-1}} e^{u ^\top v/\tau} \mu(\text{d}u)\mu(\text{d}v)
\end{align*}
where $\mu = f_k \circ g \# \lambda $ is the pushforward measure on $\mathbb{S}^{d-1}$, and the second equality follows from the fact that $\mathcal L_\text{align} (f_k) = 0$. Theorem 1 of Wang and Isola \cite{wang2020understanding} establishes that the operator, 
\vspace{-10pt}

\[ \mu \mapsto \int_{\mathbb{S}^{d-1}} \log  \int_{\mathbb{S}^{d-1}} e^{u ^\top v/\tau} \mu(\text{d}u)\mu(\text{d}v)\] 
\vspace{-10pt}

is minimized over the space of Borel measures on $\mathbb{S}^{d-1}$ if and only if $\mu = \sigma_d$, the uniform distribution on $\mathbb{S}^{d-1}$, as long as such an $f$ exists. However, since by construction $f_k(x) =  \Pi^k  \circ h \circ g (z) = z^k$ is uniformly distributed on $\mathbb{S}^{d-1}$, we know that $(f_k\circ g) \# \lambda= \sigma_d$, and hence that $f_k$ minimizes $ \mathcal L_\text{align}$ and $\mathcal L_\text{unif}$ and hence also the sum $\mathcal L = \mathcal L_\text{align} + \mathcal L_\text{unif}$. 

Recall that we seek encoder $f_\text{supp}$ that suppress feature $j$, and $f_\text{disc}$ that distinguishes feature $j$. We have a family $\{ f^k\}_{k \in [n]}$ that are optimal, and select the two encoders we seen from this collection. First, for $f_\text{supp}$ define $f_\text{supp} = f^k$ for any $k \neq j$. Then by construction $f_\text{supp}(x) = z^k$ (where $x \in \mathcal X_z$) depends only on $z^k$, which is independent of $z^j$. Due to independence, we therefore know that for any pair $z^j , \bar{z}^j\in \mathcal Z^j$, we have $\mu (\cdot| z^j) =  \mu (\cdot| {\bar{z}^j})$, i.e., that $f_\text{supp}$ is optimal but suppresses feature $j$. Similarly, simply define $f_\text{disc} = f^j$. So $f_\text{disc}(x) = z^j$ where $x \in \mathcal X_z$, and for any $z^j , \bar{z}^j\in \mathcal Z^j$ with $z^j \neq \bar{z}^j$ the pushforwards $\mu (\cdot| z^j),  \mu (\cdot| {\bar{z}^j})$ are the Dirac measures $\delta_{z^j},\delta_{\bar{z}^j}$, which are disjoint.
\end{proof}

 \begin{figure}[t] %{6.5cm}
  \includegraphics[width=\textwidth]{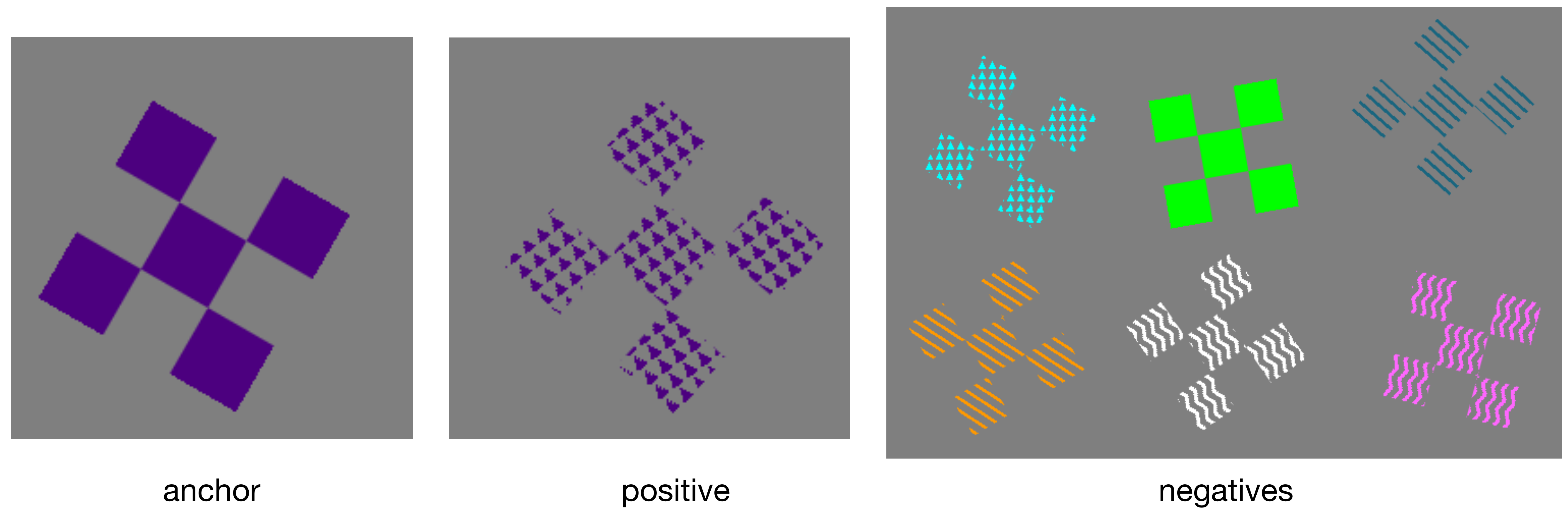}
     \caption{Visual illustration of Prop. \ref{prop: S held constant implies suppressed} using Trifeature samples \cite{hermann2020shapes}. In this example the shape feature is kept fixed across all positives and negatives ( $S = \{\text{shape}\}$), with color and texture to varying. As a consequence, the positive pair cannot be discriminated from negatives  using the shape feature. The encoder must learn color features in order to identify this positive pair. In other words, if a given set of features (e.g. $S = \{\text{shape}\}$) are constant across positive and negative pairs, then instance discrimination task demands the use of features in the compliment  (e.g. $\{\text{color,texture}\}$).}
       \vspace{-10pt}
\label{fig: trifeature second prop vis}
\end{figure}

Next we present a result showing that, under suitable conditions that guarantee that minimizers exists, any $f$ optimizing the InfoNCE loss is guaranteed to suppress features $S$ if all batches $x_1^+, x_2^+ , \{ x_i^-\}_{i=1}^N $ are  have the same features $S$ (but that the value $z_S$ taken is allowed to vary). This result captures the natural intuition that if a feature cannot be used to discriminate instances, then it will not be learned by the InfoNCE loss. Before reading the proposition, we encourage the reader to see Fig. \ref{fig: trifeature second prop vis} for an intuitive visual illustration of the idea underlying Prop. \ref{prop: S held constant implies suppressed} using Trifeature samples \cite{hermann2020shapes}.

However, this result also points to a way to manage which features are learned by an encoder, since if $f$ is guaranteed \emph{not} to learn features $S$, then necessarily $f$ must use other features to solve the instance discrimination task. This insight lays the foundation for the implicit feature modification technique, which perturbs the embedding $v=f(x)$ to remove information that $f$ uses to discriminate instances -- and then asks for instance discrimination using \emph{both} to original embedding, and the modified one -- with the idea that this encourages $f$ to learn new features that it previously suppressed.

\begin{manualprop}{2} For a set $S \subseteq [n]$ of features let 
\vspace{-10pt}

\begin{equation*}
\mathcal L_S(f) = \mathcal L_{\text{align}}(f) + \mathbb{E}_{x^+} \big [ - \log  \mathbb{E}_{x^-}  [ e^{f(x^+)^\top f(x^-)} |  z^S = z^{S-}  ] \big ] \end{equation*}
\vspace{-10pt}

denote the (limiting) InfoNCE conditioned on $x^+,x^-$ having the same features $S$.  Suppose that $p_j$ is uniform on $\mathcal Z^j = \mathbb{S}^{d-1}$ for all $j \in [n]$.
Then the infimum  $ \inf \mathcal L_S$ is attained, and every $f \in \min_{f'} \mathcal L_S(f')$  suppresses features $S$ almost surely.
\end{manualprop}
\vspace{-10pt}
\begin{proof}
By Prop \ref{prop: S held constant implies suppressed}, we know that for each $z^S$ there is a measurable $f$ such that $\mathcal L_{\text{align}}(f)=0$ and $f$ achieves perfect uniformity $(f\circ g) \# \lambda(\cdot| z^S) = \sigma_d$ conditioned on $z^S$. So consider such an $f$. Since $\mathcal L_{\text{align}}(f)=0$ we may write,
\vspace{-10pt}
\begin{align*}
  \mathcal L_S ( f) &= \mathbb{E}_{x^+} \big [ - \log  \mathbb{E}_{x^-}  [ e^{f(x^+)^\top f(x^-)} |  z^S = z^{S-}  ] \big ] \\
  &= \mathbb{E}_{z^S} \mathbb{E}_{z^{S-}} \big [ - \log  \mathbb{E}_{z^-}  [ e^{f \circ g(z)^\top f\circ g(z^-)} |  z^S = z^{S-}   ] \big ] \\
    &= \mathbb{E}_{z^S} \mathcal L ( f ; z^S) .
\end{align*}
\vspace{-10pt}

Where we have introduced the conditional loss function 
\vspace{-10pt}

\begin{align*}
\mathcal L ( f ; z^S)=  \mathbb{E}_{z^{S-}} \big [ - \log  \mathbb{E}_{z^-}  [ e^{f \circ g(z)^\top f\circ g(z^-)} |  z^S = z^{S-}   ] \big ] 
\end{align*}
\vspace{-10pt}

We shall show that any minimizer $f$ of $\mathcal L_S$ is such that $f$ minimizes $ \mathcal L ( f ; z^S) $ for all values of $z^S$. To show this notice that  $\min_f \mathcal L_S(f) = \min_f  \mathbb{E}_{z^S} \mathcal L ( f ; z^S) \geq  \mathbb{E}_{z^S}\min_f  \mathcal L ( f ; z^S)$ and if there is an $f$ such that $f$ minimizes $ \mathcal L ( f ; z^S) $ for each $z^S$ then the inequality is tight. So we make it our goal to show that there is an $f$ such that $f$ minimizes $ \mathcal L ( f ; z^S) $ for each $z^S$.

For fixed $z^S$, by assumption there is an $f_{z^S}$ such that    $(f_{z^S}\circ g) \# \lambda(\cdot| z^S) = \sigma_d$. That is, $f_{z^S}$  achieves perfect uniformity given $z^S$.  Theorem 1 of Wang and Isola \cite{wang2020understanding} implies that $f_{z^S}$ must minimize $\mathcal L ( f ; z^S)$. Given $\{f_{z^S}\}_{z^S}$ we construct an $f: \mathcal X \rightarrow \mathbb{S}^{d-1}_\tau $ that minimizes $\mathcal L ( f ; z^S)$ \emph{for all} $z^S$. By injectivity of $g$ we may partition $\mathcal X$ into pieces $\bigcup _{z^S \in \mathcal Z^S}\mathcal X_{z^S}$ where $\mathcal X_{z^S} = \{ x: x=g((z^S, z^{S^c})) \text{ for some } z^{S^c} \in \mathcal Z^{S^c}\}$. So we may simply define $f$ on domain $\mathcal X$ as follows:  $f(x) = f_{z^S}(x)$ if $x \in \mathcal X_{z^S}$.

This construction allows us to conclude that the minimum of $\mathcal L_S$ is attained, and any minimizer $f$ of $\mathcal L_S$ also minimizes $ \mathcal L ( f ; z^S) $ for each $z^S$. By Theorem 1 of Wang and Isola \cite{wang2020understanding} any such $f$ is such that $(f_{z^S}\circ g) \# \lambda(\cdot| z^S) = \sigma_d$ for all $z^S$, which immediately implies that $f$ suppresses features $S$.
\end{proof}

\section{Computation of implicit feature modification updates}\label{appdx: simadv}
\vspace{-10pt}
This section gives detailed derivations of two simple but key facts used in the development of IFM. The first result derives an analytic expression for the gradient of the InfoNCE loss with respect to positive sample in latent space, and the second result computes the gradient with respect to an arbitrary negative sample. The analysis is very simple, only requiring the use of elementary tools from calculus. Despite its simplicity, this result is very important, and forms the core of our approach. It is thanks to the analytic expressions for the gradients of the InfoNCE loss that we are able to implement our adversarial method \emph{without introducing any memory or run-time overheads}. This is a key distinction from previous adversarial methods for contrastive learning, which introduce significant overheads (see Fig. \ref{fig: simadv acl comparison}).

Recall the  statement of the lemma.

\begin{lemma}
For any $v,v^+, \{ v^-_i \}_{i=1}^m \in \mathbb{R}^d$ we have,
\vspace{-3pt}
\begin{equation*}
\nabla_{v^-_j} \ell  = \frac{e^{v^\top v^-_j}}{e^{v^\top v^+/\tau} + \sum_{i=1}^m e^{v^\top v^-_i/\tau}} \cdot \frac{v}{\tau}  \quad  \text{and} \quad  \nabla_{v^+} \ell   =  \bigg (   \frac{e^{v^\top v^+/\tau}}{e^{v^\top v^+/\tau} + \sum_{i=1}^m e^{v^\top v^-_i/\tau}} -1 \bigg ) \cdot \frac{v}{\tau} .
\end{equation*}
In particular, $\nabla_{v^-_j} \ell \propto v $ and $\nabla_{v^+} \ell \propto -v $.
\end{lemma}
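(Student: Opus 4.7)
The plan is to treat this as a direct calculus computation, exploiting the fact that both $v^+$ and each $v_i^-$ enter the loss only through their inner product with the anchor $v$. I would first rewrite the point-wise InfoNCE loss by splitting the logarithm:
\[
\ell(v,v^+,\{v_i^-\}) = -\frac{v^\top v^+}{\tau} + \log\!\Big(e^{v^\top v^+/\tau} + \sum_{i=1}^m e^{v^\top v_i^-/\tau}\Big).
\]
Call the argument of the log $Z$. This form makes each of the two gradient computations essentially a one-line application of the chain rule.

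Next, I would differentiate with respect to $v_j^-$. The first term does not depend on $v_j^-$, and only the $i=j$ summand inside $Z$ contributes. The chain rule gives
\[
\nabla_{v_j^-}\ell \;=\; \frac{1}{Z}\cdot e^{v^\top v_j^-/\tau}\cdot \nabla_{v_j^-}\!\Big(\tfrac{v^\top v_j^-}{\tau}\Big) \;=\; \frac{e^{v^\top v_j^-/\tau}}{Z}\cdot\frac{v}{\tau},
\]
which matches the stated formula. For $v^+$, both the linear term and the $e^{v^\top v^+/\tau}$ summand inside $Z$ contribute; grouping the two contributions yields the $\big(\tfrac{e^{v^\top v^+/\tau}}{Z}-1\big)\tfrac{v}{\tau}$ form. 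The proportionality assertions are then immediate because in both expressions everything except the factor $v/\tau$ is a positive scalar, and the scalar in front of $v$ is nonpositive for $v^+$ (since $e^{v^\top v^+/\tau}/Z \le 1$) and positive for $v_j^-$.

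There is no real obstacle here: the argument is essentially the standard softmax/log-sum-exp gradient calculation. The only thing worth flagging is the structural takeaway, which is what the lemma is really used for downstream: because $v^+$ and $v_j^-$ appear in $\ell$ exclusively via the scalars $v^\top v^+$ and $v^\top v_j^-$, all gradients are forced to lie along the one-dimensional subspace spanned by $v$. This is what makes the constrained maximization in Definition of IFM solvable in closed form rather than via an iterative inner loop, so I would end the proof by explicitly stating this alignment as the content of the "In particular" clause.
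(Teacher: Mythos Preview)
Your proposal is correct and matches the paper's proof essentially line for line: both split the loss as $-v^\top v^+/\tau + \log Z$, apply the chain rule to each variable, and read off the proportionality from the signs of the scalar coefficients. The only cosmetic difference is your introduction of the shorthand $Z$ for the denominator.
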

\vspace{-10pt}

\begin{proof}
Both results follow from direct computation. First we compute $\nabla_{v^-_j} \ell(v,v^+, \{ v^-_i\}_{i=1}^m)$. Indeed, for any $j \in \{ 1, 2 , \ldots , m\} $ we have,

\begin{align*}
   \nabla_{v^-_j} \bigg \{  - \log \frac{e^{v^\top v^+/\tau}}{e^{v^\top v^+/\tau} + \sum_{i=1}^m e^{v^\top v^-_i/\tau}} \bigg \} &= \nabla_{v^-_j}    \log \bigg \{ e^{v^\top v^+/\tau} + \sum_{i=1}^m e^{v^\top v^-_i/\tau} \bigg \} \\
   &= \frac{\nabla_{v^-_j} \bigg \{ e^{v^\top v^+} + \sum_{i=1}^m e^{v^\top v^-_i/\tau} \bigg \}}{e^{v^\top v^+/\tau} + \sum_{i=1}^m e^{v^\top v^-_i/\tau}} \\
   &= \frac{e^{v^\top v^-_j/\tau} \cdot v/\tau}{e^{v^\top v^+/\tau} + \sum_{i=1}^m e^{v^\top v^-_i/\tau}}
\end{align*}

the quantity $\frac{e^{v^\top v^-_j/\tau} }{e^{v^\top v^+/\tau} + \sum_{i=1}^m e^{v^\top v^-_i/\tau}}>0$ is a strictly positive scalar, allowing us to conclude the derivative $\nabla_{v^-_j} \ell $ is proportional to  $ v $. We also compute $\nabla_{v^+} \ell(v,v^+, \{ v^-_i\}_{i=1}^m) $ in a similar fashion,

\begin{align*}
   \nabla_{v^+} \bigg \{  - \log \frac{e^{v^\top v^+/\tau}}{e^{v^\top v^+/\tau} + \sum_{i=1}^m e^{v^\top v^-_i/\tau}} \bigg \} &=  \nabla_{v^+} \big \{ - \log e^{v^\top v^+/\tau} \big \} + \nabla_{v^+}  \log \bigg \{ e^{v^\top v^+/\tau} + \sum_{i=1}^m e^{v^\top v^-_i/\tau} \bigg \} \\
   &= -\frac{v}{\tau} + \frac{ \nabla_{v^+} \bigg \{ e^{v^\top v^+/\tau} + \sum_{i=1}^m e^{v^\top v^-_i/\tau} \bigg \}}{e^{v^\top v^+/\tau} + \sum_{i=1}^m e^{v^\top v^-_i/\tau}} \\
   &= -\frac{v}{\tau} + \frac{e^{v^\top v^+/\tau} \cdot v/\tau}{e^{v^\top v^+/\tau} + \sum_{i=1}^m e^{v^\top v^-_i/\tau}} \\
   &=  \bigg (   \frac{e^{v^\top v^+/\tau}}{e^{v^\top v^+/\tau} + \sum_{i=1}^m e^{v^\top v^-_i/\tau}} -1 \bigg ) \cdot \frac{v}{\tau}.
\end{align*}

Since $0 < \frac{e^{v^\top v^+/\tau}}{e^{v^\top v^+/\tau} + \sum_{i=1}^m e^{v^\top v^-_i/\tau}} < 1$ we conclude in this case that the derivative $\nabla_{v^+} \ell $ points in the direction $-v$. 
\end{proof}

\subsection{ Alternative formulations of implicit feature modification}\label{appdx: pre-normalization}

This section contemplates two simple modifications to the IFM method with the aim of confirming that these modifications do not yield superior performance to the default proposed method. The two alternate methods focus around the following observation: IFM perturbs embeddings of unit length, and returns a modified version that will no longer be of unit length in general.
We consider two alternative variations of IFM that yield normalized embeddings. The first is the most simple solution possible: simply re-normalize perturbed embeddings to have unit length. The second is slightly more involved, and involves instead applying perturbations \emph{before} normalizing the embeddings. Perturbing unnormalized embeddings, then normalizing,  guarantees the final embeddings have unit length. The key property we observed in the original formulation was the existence of an analytic, easily computable closed form expressions for the derivatives. This property enables efficient computation of newly synthesized ``adversarial'' samples in latent space. Here we derive corresponding formulae for the pre-normalization attack.

For clarity, we introduce the slightly modified setting in full detail. We are given positive pair $x,x^+$ and a batch of negative samples $\{x^-_i\}_{i=1}^m$ and denote their encodings via $f$ as $v=f(x), v^+=f(x^+)$, and $v^-_i=f(x^-_i)$ for $i=1,\ldots m$ where we \emph{do not} assume that $f$ returns normalized vectors. That is, $f$ is allowed to map to anywhere in the ambient latent space $\mathbb{R}^d$.  The re-parameterized point-wise contrastive loss for this batch of samples is 
\vspace{-10pt}

\begin{equation*}
  \ell(v,v^+, \{ v^-_i\}_{i=1}^m) = - \log \frac{e^{\text{sim}(v, v^+)/\tau}}{e^{\text{sim}(v, v^+)/\tau} + \sum_{i=1}^m e^{\text{sim}(v, v^-_i)/\tau}},
\end{equation*}
\vspace{-10pt}

where $\text{sim}(u,v) = u \cdot v / \norm{u} \norm{v}$ denotes the cosine similarity measure. As before we wish to perturb $v^+$ and negative encodings $v^-_j$ to increase the loss, thereby making the negatives harder. Specifically we wish to solve $\max_{\delta^+ \in \mathcal B_{\varepsilon^+}, \{\delta_i^- \in \mathcal B_{\varepsilon_i} \}_{i=1}^m} \ell(v,v^+ + \delta^+, \{ v^-_i + \delta^-_i\}_{i=1}^m)$. The following lemma provides the corresponding gradient directions.

\begin{lemma}\label{lem: pre-norm derivative}
For any $v,v^+, \{ v^-_i\}_{i=1}^m \in \mathbb{R}^d$ we have 
\begin{equation*}
\nabla_{v^-_j} \ell \propto \frac{v}{  \norm{v}} - \text{sim}(v^-_j,v) \frac{v^-_j}{\|v^-_j\|} \quad  \text{and} \quad   \nabla_{v^+} \ell  \propto \frac{v}{  \norm{v}} - \text{sim}(v^+,v) \frac{v^+}{ \norm{v^+}}.
\end{equation*}
\end{lemma}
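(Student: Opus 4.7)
The plan is to compute the two gradients by the chain rule, handling the $-\log(\text{softmax})$ outer structure and the cosine similarity separately. The inner piece is the only real calculation: I need $\nabla_u \mathrm{sim}(u,v)$ for a generic $u \in \mathbb{R}^d$, where now $u$ is \emph{unnormalized}. The outer piece is the standard softmax-type derivative we already saw in the previous lemma.

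First I would compute $\nabla_u \mathrm{sim}(u,v) = \nabla_u \big(u^\top v / (\|u\|\|v\|)\big)$ by the quotient rule (using $\nabla_u \|u\| = u/\|u\|$). This gives
\[
\nabla_u \mathrm{sim}(u,v) \;=\; \frac{1}{\|u\|}\left(\frac{v}{\|v\|} - \mathrm{sim}(u,v)\,\frac{u}{\|u\|}\right),
\]
i.e., the geometrically natural projection onto the tangent space of the sphere at $u/\|u\|$. Note that the scalar prefactor $1/\|u\|$ is strictly positive, so up to proportionality the direction is exactly the bracketed term.

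Next I would apply the chain rule to $\ell$, writing $s^+ = \mathrm{sim}(v,v^+)/\tau$ and $s^-_i = \mathrm{sim}(v,v^-_i)/\tau$ and treating $\ell$ as $-\log\big(e^{s^+}/(e^{s^+}+\sum_i e^{s^-_i})\big)$. Exactly as in the previous lemma, differentiating through the log-sum-exp yields
\[
\nabla_{v^-_j}\ell \;=\; \frac{e^{s^-_j}}{e^{s^+}+\sum_i e^{s^-_i}} \cdot \frac{1}{\tau}\,\nabla_{v^-_j}\mathrm{sim}(v,v^-_j),
\]
and
\[
\nabla_{v^+}\ell \;=\; \left(\frac{e^{s^+}}{e^{s^+}+\sum_i e^{s^-_i}} - 1\right) \cdot \frac{1}{\tau}\,\nabla_{v^+}\mathrm{sim}(v,v^+).
\]
Plugging in the formula for $\nabla_u\mathrm{sim}(u,v)$ from step one (with $u = v^-_j$ or $u = v^+$) and absorbing all scalar factors into the proportionality symbol gives precisely the claimed expressions.

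The only subtlety, and what I would flag as the main thing to get right, is the cosine-similarity gradient: unlike in the previous lemma where embeddings lay on the sphere and the gradient direction collapsed to $\pm v$, here we pick up an extra self-projection term $\mathrm{sim}(u,v)\,u/\|u\|$ that subtracts off the component of $v/\|v\|$ along the current $u$ direction. Once that piece is correct, the rest is bookkeeping identical to the proof of the earlier lemma, and the sign conventions absorbed into $\propto$ (scalar multiple, possibly negative, as in the previous lemma) make the statement immediate.
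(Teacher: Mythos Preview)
Your proposal is correct and follows essentially the same route as the paper: first derive the gradient of $\mathrm{sim}(u,v)$ with respect to an unnormalized argument (the paper states this as a separate auxiliary lemma without proof, you compute it via the quotient rule), then push through the same log-sum-exp chain-rule computation as in the earlier lemma and absorb the scalar factors into $\propto$. Your remark about the sign being absorbed into $\propto$ for the $v^+$ case is exactly how the paper handles it as well.
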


To prove this lemma we rely on the following  well-known closed form expression for the derivative of the cosine similarity, whose proof we omit.

\begin{lemma}
 $\nabla_{v} \text{sim}(v, u) = \frac{u}{ \norm{v} \norm{u}} - \text{sim}(v,u) \frac{v}{ \norm{v}^2}.$
\end{lemma}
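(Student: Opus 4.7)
The target statement is a purely computational identity, so the plan is to unfold the definition of $\text{sim}$ and apply elementary calculus rules. First I would write $\text{sim}(v,u) = \dfrac{v\cdot u}{\norm{v}\norm{u}}$, observing that $u$ is treated as a constant vector when taking $\nabla_v$, so the scalar $1/\norm{u}$ factors out of the gradient. This reduces the problem to computing $\nabla_v\!\left[(v\cdot u)\,\norm{v}^{-1}\right]$.

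Next I would apply the product rule. The first factor contributes $\nabla_v(v\cdot u)=u$ since the inner product is linear in $v$. For the second factor I would invoke the standard identity $\nabla_v\norm{v}=v/\norm{v}$ (which follows immediately from differentiating both sides of $\norm{v}^2 = v\cdot v$), and then apply the chain rule to $t\mapsto t^{-1}$ to conclude $\nabla_v \norm{v}^{-1} = -v/\norm{v}^{3}$. Combining via the product rule gives
\[
\nabla_v\frac{v\cdot u}{\norm{v}} \;=\; \frac{u}{\norm{v}} \;-\; \frac{(v\cdot u)\, v}{\norm{v}^{3}}.
\]
Dividing through by $\norm{u}$ and regrouping the second term as $\dfrac{v\cdot u}{\norm{v}\norm{u}}\cdot\dfrac{v}{\norm{v}^{2}} = \text{sim}(v,u)\cdot\dfrac{v}{\norm{v}^{2}}$ yields exactly the asserted expression.

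There is no real obstacle here; the only care needed is bookkeeping of the various $\norm{v}$ and $\norm{u}$ factors so that the ratio $(v\cdot u)/(\norm{v}\norm{u})$ is correctly reassembled into $\text{sim}(v,u)$, leaving behind a clean $v/\norm{v}^{2}$. The identity is valid only for $v\neq 0$, which is harmless in the contrastive setting where encoded samples are assumed nonzero so that $\text{sim}$ is well-defined.
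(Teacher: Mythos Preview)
Your computation is correct. The paper itself omits the proof of this lemma, describing the identity as ``well-known,'' so there is nothing to compare against; your product-rule derivation is exactly the standard argument one would supply.
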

\vspace{-10pt}

\begin{proof}[Proof of Lemma \ref{lem: pre-norm derivative}]
We compute, 
\vspace{-20pt}

\begin{align*}
    \nabla_{v^-_j} \ell &=  \nabla_{v^-_j}  \log \bigg ( e^{\text{sim}(v, v^+)} + \sum_{i=1}^m e^{\text{sim}(v, v^-_i)} \bigg ) \\
    &= \frac{e^{\text{sim}(v, v^-_j)}}{e^{\text{sim}(v, v^+)} + \sum_{i=1}^m e^{\text{sim}(v, v^-_i)}} \cdot \nabla_{v^-_j} \text{sim}(v, v^-_j)
\end{align*}
\vspace{-10pt}

Using the formula for the derivative of the cosine similarity, we arrive at a closed form formula, 
\begin{align*}
    \nabla_{v^-_j} \ell 
    &= \frac{e^{\text{sim}(v, v^-_j)}}{e^{\text{sim}(v, v^+)} + \sum_{i=1}^m e^{\text{sim}(v, v^-_i)}} \cdot \bigg (  \frac{v}{ \|v^-_j\| \norm{v}} - \text{sim}(v^-_j,v) \frac{v^-_j}{ \|v^-_j\|^2}) \bigg ). \\
    &\propto \frac{v}{  \|v\|} - \text{sim}(v^-_j,v) \frac{v^-_j}{ \|v^-_j\|}
\end{align*}
%\vspace{-20pt}
%
Similar computations yield
%\vspace{-10pt}
%
\begin{align*}
     \nabla_{v^+} \ell  &= - \nabla_{v^+}  \log \frac{e^{\text{sim}(v, v^+)}}{e^{\text{sim}(v, v^+)} + \sum_{i=1}^m e^{\text{sim}(v, v^-_j)}} \\
    &=\nabla_{v^+} \bigg ( - \text{sim}(v, v^+) + \log \big ( e^{\text{sim}(v, v^+)} + \sum_{i=1}^m e^{\text{sim}(v, v^-_i)} \big ) \bigg )   \\
    &=  \bigg (   \frac{e^{\text{sim}(v, v^+)}}{e^{\text{sim}(v, v^+)} + \sum_{i=1}^m e^{\text{sim}(v, v^-_i)}} -1 \bigg ) \cdot \nabla_{v^+} \text{sim}(v, v^+) \\
    &= \bigg (   \frac{e^{\text{sim}(v, v^+)}}{e^{\text{sim}(v, v^+)} + \sum_{i=1}^m e^{\text{sim}(v, v^-_i)}} -1 \bigg )\cdot \bigg (  \frac{v}{ \norm{v^+} \norm{v}} - \text{sim}(v^+,v) \frac{v^+}{ \norm{v^+}^2}) \bigg ) \\
    &\propto \frac{v}{  \norm{v}} - \text{sim}(v^+,v) \frac{v^+}{ \norm{v^+}}
\end{align*}
%\vspace{-10pt}
\end{proof}

Lemma \ref{lem: pre-norm derivative} provides precisely the  efficiently computable formulae for the derivatives we seek. One important difference between this pre-normalization case and the original setting is that the direction vector depends on $v^-_j$ and  $v^+$ respectively. In the original (unnormalized) setting the derivatives depend only on $v$, which allowed the immediate and exact discovery of the worst case perturbations in an $\varepsilon$-ball. Due to these additional dependencies in the pre-normalized case the optimization is more complex, and must be approximated iteratively. Although only approximate, it is still computationally cheap since we have simple analytic expressions for gradients.  

It is possible give an interpretation to the pre-normalization derivatives $\nabla_{v^-_j}\ell$ by considering the $\ell_2$ norm,
\vspace{-10pt}

\begin{align*}
\| \nabla_{v^-_j}\|_{2} &= \sqrt{\big (\frac{v}{\norm{v}}-\frac{v^\top v^-_j}{\norm{v}\|v^-_j\|}\frac{v^-_j}{\|v^-_j\|}\big )\cdot \big (\frac{v}{\norm{v}}-\frac{v^\top v^-_j}{\|v\|\|v^-_j\|}\frac{v^-_j}{\|v^-_j\|} \big )}\\
&=\sqrt{1+\text{sim}(v, v^-_i)^2-2\text{sim}(v, v^-_i)^2}\\
&=\sqrt{1-\text{sim}(v, v^-_i)^2}
\end{align*}
\vspace{-10pt}

So, samples $v^-_i$ with higher cosine similarity with anchor $v$ receive smaller updates. Similar calculations for $v^+$ show that higher cosine similarity with anchor $v$ leads to larger updates. In other words, the pre-normalization version of the method  automatically adopts an adaptive step size based on sample importance.

%\vspace{-15pt}
\subsubsection{Experimental results using alternative formulations}
%\vspace{-10pt}

In this section we test the two alternative implementations to confirm that these simple alternatives do not obtain superior performance to IFM. We consider only object-based images, so it remains possible that other modalities may benefit from alternate formulations. First note that $f$ encodes all points to the boundary of the same hypersphere, while perturbing $v^-_i \leftarrow v^-_i + \varepsilon_i   v  $ and $v^+ \leftarrow v^+ - \varepsilon_+   v  $ moves adversarial samples off this hypersphere. We therefore consider normalizing  all points again \emph{after} perturbing (\emph{+ norm}). The second method considers applying attacks \emph{before} normalization (\emph{+ pre-norm}), whose gradients were computed in the Lem. \ref{lem: pre-norm derivative}. It is still possible to compute analytic gradient expressions in this setting; we refer the reader to Appendix \ref{appdx: pre-normalization} for full details and derivations. Results reported in Tab. \ref{tab: alternative adv}, suggest that all versions improve over MoCov2, and both alternatives perform comparably to the default implementation based on Eqn. \ref{eqn: attack both loss}. 

\begin{table}
\centering
\begin{tabular}{ccccc}
\toprule  
Dataset & MoCo-v2 & \multicolumn{3}{c}{IFM-MoCo-v2}
 \\
  \cmidrule(lr){1-1}  \cmidrule(lr){2-2} 
 \cmidrule(lr){3-5}
-- &  -- &  default & + norm & + pre-norm \\
\toprule
STL10 & 92.4\% & 92.9\% & 92.9\% & \textbf{93.0\%} \\
CIFAR10 & 91.8\% & \textbf{92.4\%} & 92.2\% & 92.0\% \\
CIFAR100 & 69.0\% & \textbf{70.3\%} & 70.1\%  & 70.2\% \\
\bottomrule
\end{tabular}
\caption{Linear readout performance of alternative latent space adversarial methods. We report the best performance over runs for $\varepsilon \in \{0.05,0.1,0.2,0.5\}$. We find that the two modifications to IFM we considered do not improve performance compared to the default version of IFM.}
 \label{tab: alternative adv}
 \end{table}

\section{Supplementary experimental results and details}\label{appdx: experiments}

\subsection{Hardware and setup}

Experiments were run on two internal servers. The first consists of 8 NVIDIA GeForce RTX 2080 Ti GPUs (11GB). The second consists of 8 NVIDIA Tesla V100 GPUs (32GB). All experiments use the PyTorch deep learning framework \cite{paszke2019pytorch}. Specific references to pre-existing code bases used are given in the relevant sections below.

\subsection{Feature suppression experiments}\label{appdx: stl-digits suppression}

This section gives experimental details for all experiments in Sec. \ref{sec: task design} in the main manuscript, the section studying the relation between feature suppression and instance discrimination.

\subsubsection{Datasets}

 \begin{figure}[t] %{6.5cm}
  \includegraphics[width=\textwidth]{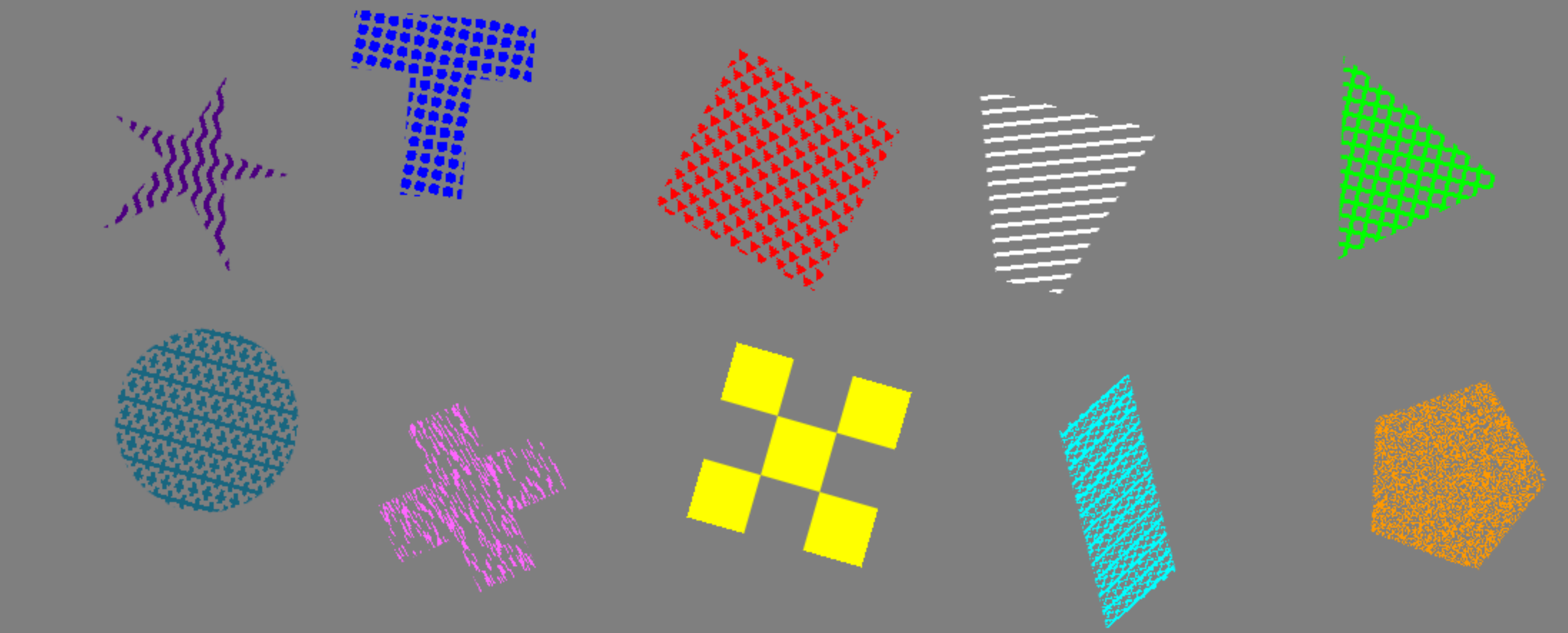}
     \caption{Sample images from the Trifeature dataset \cite{hermann2020shapes}. There are three features: shape, color, and texture. Each feature has $10$ different possible values. We show exactly one example of each feature.}
       \vspace{-10pt}
\label{fig: trifeature samples}
\end{figure}

 \begin{figure}[t] %{6.5cm}
  \includegraphics[width=\textwidth]{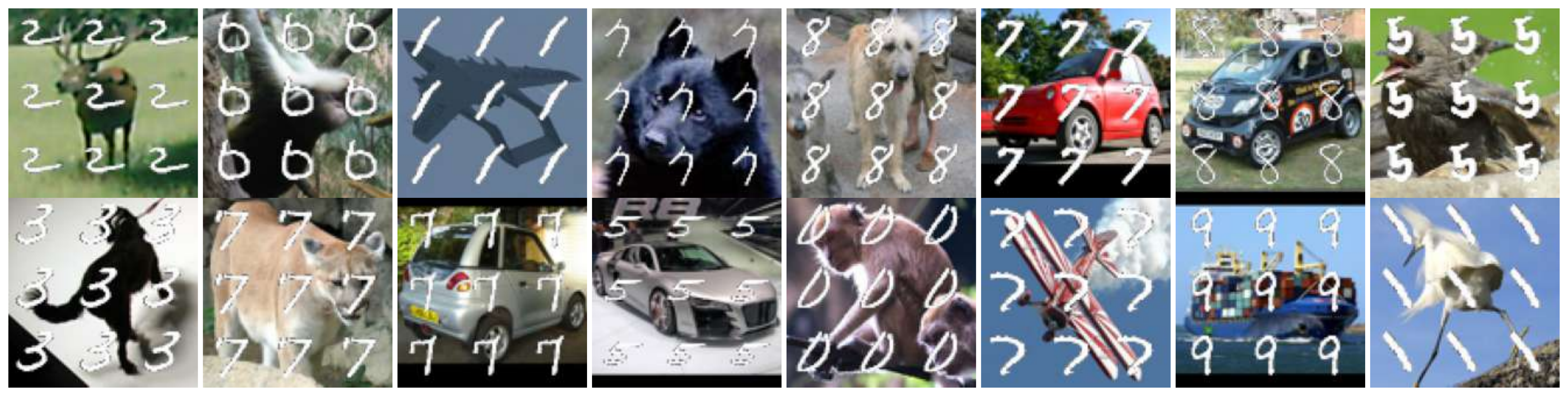}
     \caption{Sample images from the STL-digits dataset. There are two features: object class, and MNIST digit. Both features have $10$ different possible values.}
       \vspace{-10pt}
\label{fig: stl-digits samples}
\end{figure}

\paragraph{Trifeature \cite{hermann2020shapes}} Introduced by Hermann and Lampinen, each image is $128 \times 128$ and has three features: color, shape, and texture each taking 10 values. For each (color, shape, texture) triplet (1000 in total) Trifeature contains  100 examples, forming  a dataset of 100K examples in total. Train/val sets are obtained by a random 90/10 split. See Fig. \ref{fig: trifeature samples}, Appdx. \ref{appdx: experiments} for sample images.

\paragraph{STL10-digits dataset} We artificially combine MNIST digits and STL10 object to produce data with two controllable semantic features. We split the STL10 image into a $3 \times 3$ grid, placing a copy of the MNIST digit in the center of each sector. 
This is done by masking all MNIST pixels with intensity lower than $100$, and updating non-masked pixels in the STL10 image with the corresponding MNIST pixel value. 

\subsubsection{Experimental protocols}

\paragraph{Training} We train ResNet-18 encoders using SimCLR with batch size $512$. We use standard data SimCLR augmentations  \cite{chen2020simple}, but remove grayscaling and color jittering  when training on Trifeature in order to avoid corrupting color features. We use Adam optimizer, learning rate $1\times10^{-3}$ and weight decay $1\times10^{-6}$. Unless stated otherwise, the temperature $\tau$ is set to $0.5$.

\paragraph{Linear evaluation} For fast linear evaluation we first extract features from the trained encoder (applying the same augmentations to inputs as used during pre-training) then use the \texttt{LogisticRegression} function in scikit-learn \cite{pedregosa2011scikit} to train a linear classifier. We use the Limited-memory Broyden–Fletcher–Goldfarb–Shanno algorithm with a maximum iteration of $500$ for training. 

\subsubsection{Details on results}

\paragraph{Correlations Fig. \ref{fig: trifeature corr} }  For the Trifeature heatmap 33 encoders are used to compute correlations. The encoders are precisely encoders used to plot Fig. \ref{fig: feature suppression}. Similarly, the 7 encoders used to generate the STL-digits heatmap are precisely the encoders whose training is shown in Fig. \ref{fig: stl10 appdx}. When computing the InfoNCE loss for Fig. \ref{fig: trifeature corr}, for fair comparison all losses are computed using temperature normalization value $\tau=0.5$. This is independent of training, and is necessary only in evaluation to ensure loss values are comparable across different temperatures.

 \begin{figure}[t] %{6.5cm}
 \vspace{-5pt}
  \includegraphics[width=\textwidth]{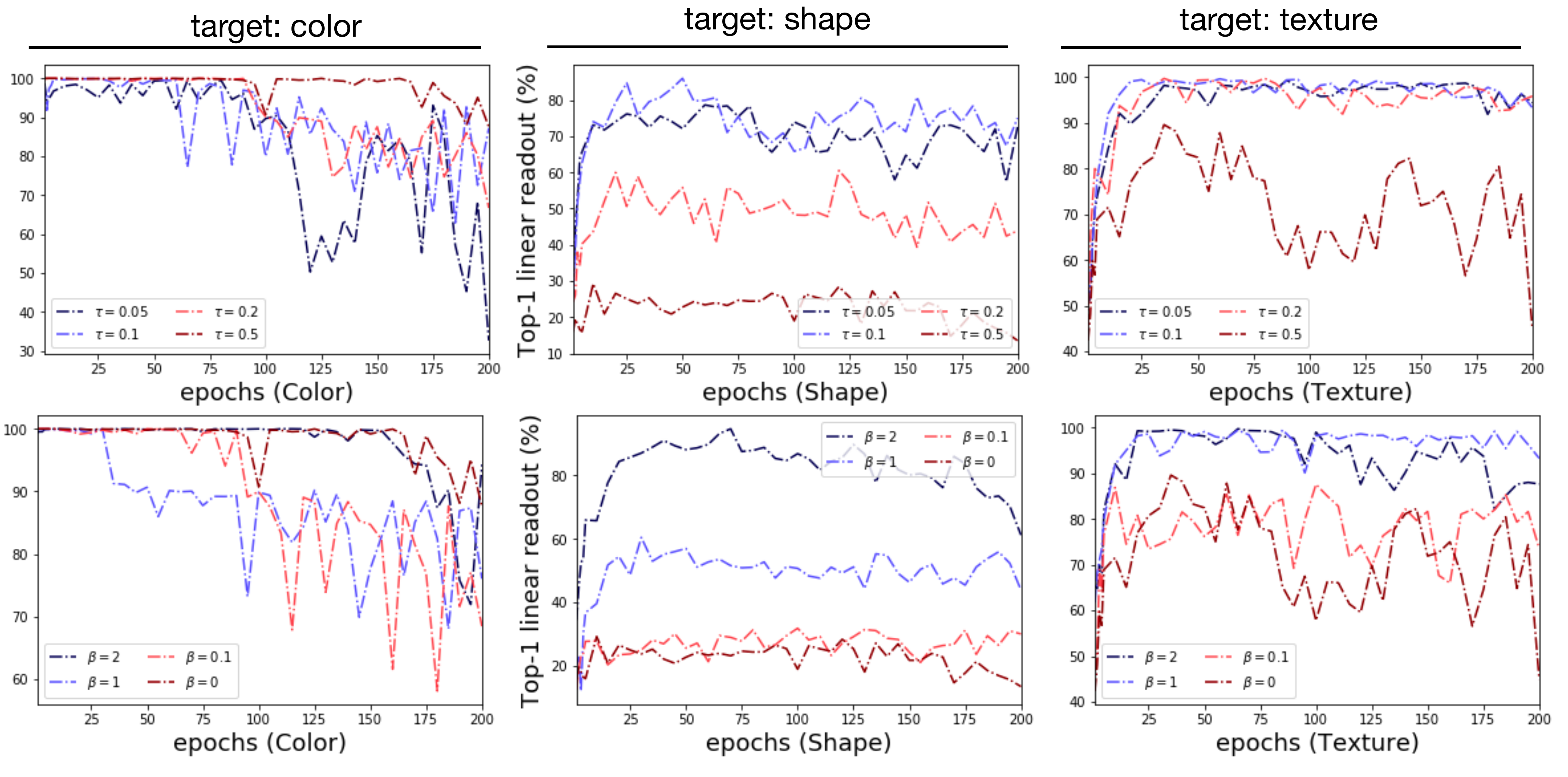}
  \vspace{-20pt}
\caption{Single run experiments showing training dynamics of Trifeature contrastive training. Linear readout performance on color prediction is particularly noisy.  }
\label{fig: trifeature stl10 appdx}
\end{figure}

 \begin{figure}[ht]
  \centering
  \includegraphics[width=65mm]{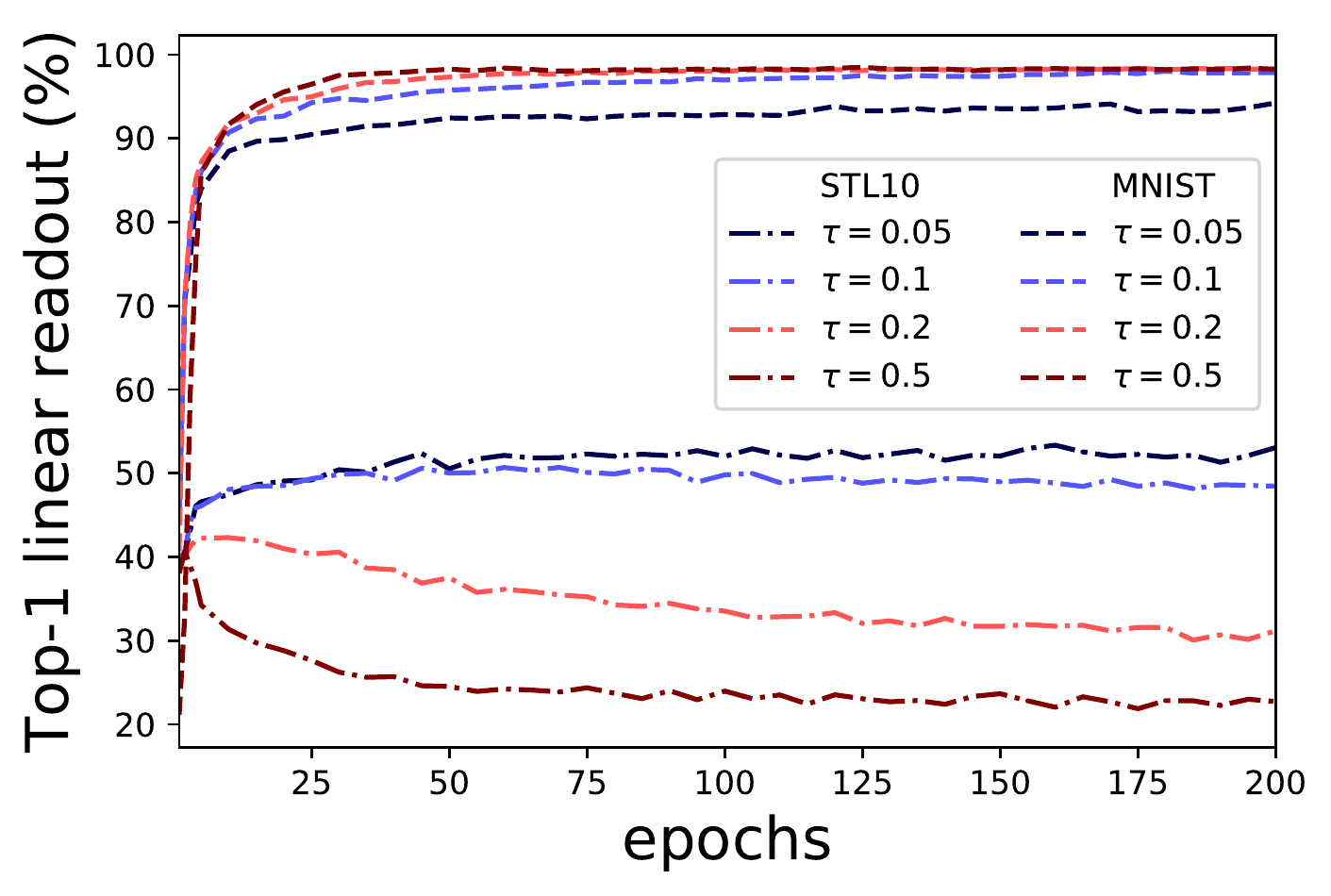}
  \includegraphics[width=65mm]{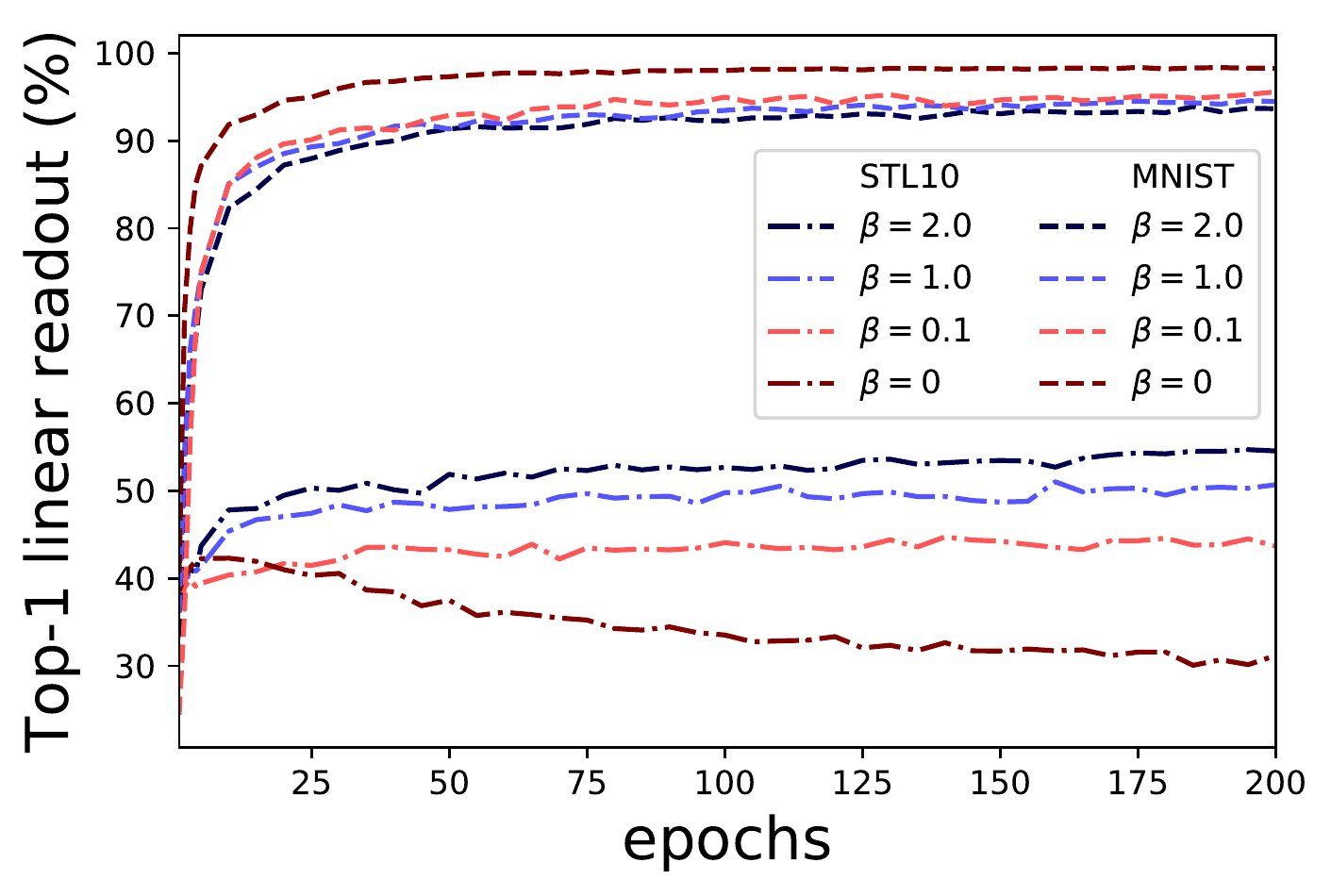}           
 \vspace{-3pt}
    \caption{STL-digits dataset. \textbf{Left:} performance on STL10 and MNIST linear readout for different temperature $\tau$ values. \textbf{Right:} performance on STL10 and MNIST linear readout for different hardness concentration $\beta$ values \cite{robinson2020contrastive}. In both cases harder instance discrimination (smaller $\tau$, bigger $\beta$) improves STL10 performance at the expense of MNIST. When  instance discrimination is too easy (big $\tau$, small $\beta$) STL10 features are \emph{suppressed}: achieving worse linear readout after training than at initialization. }%
        \label{fig: stl10 appdx}
         \vspace{-10pt}
 \end{figure}
 
Fig. \ref{fig: stl10 appdx} displays results for varying instance discrimination difficult on the STL-digits dataset. These results are complementing the Trifeature results in Fig. \ref{fig: feature suppression} in Sec. \ref{sec: task design} in the main manuscript. For STL-digits we report only a single training run per hyperparameter setting since performance is much more stable on STL-digits compared to Trifeature (see Fig. \ref{fig: trifeature stl10 appdx}). See Sec. \ref{sec: task design} for discussion of STL-digits results, which are qualitatively the same as on Trifeature. Finally, Fig. \ref{fig: simadv stldig} shows the effect of IFM on encoders trained on STL-digits. As with Trifeature, we find that IFM improves the performance on suppressed features (STL10), but only slightly. Unlike hard instance discrimination methods, IFM does not harm MNIST performance in the process. 

 \begin{figure}[t] %{6.5cm}
 \vspace{-7pt}
  \begin{center}
    \includegraphics[width=0.32\textwidth]{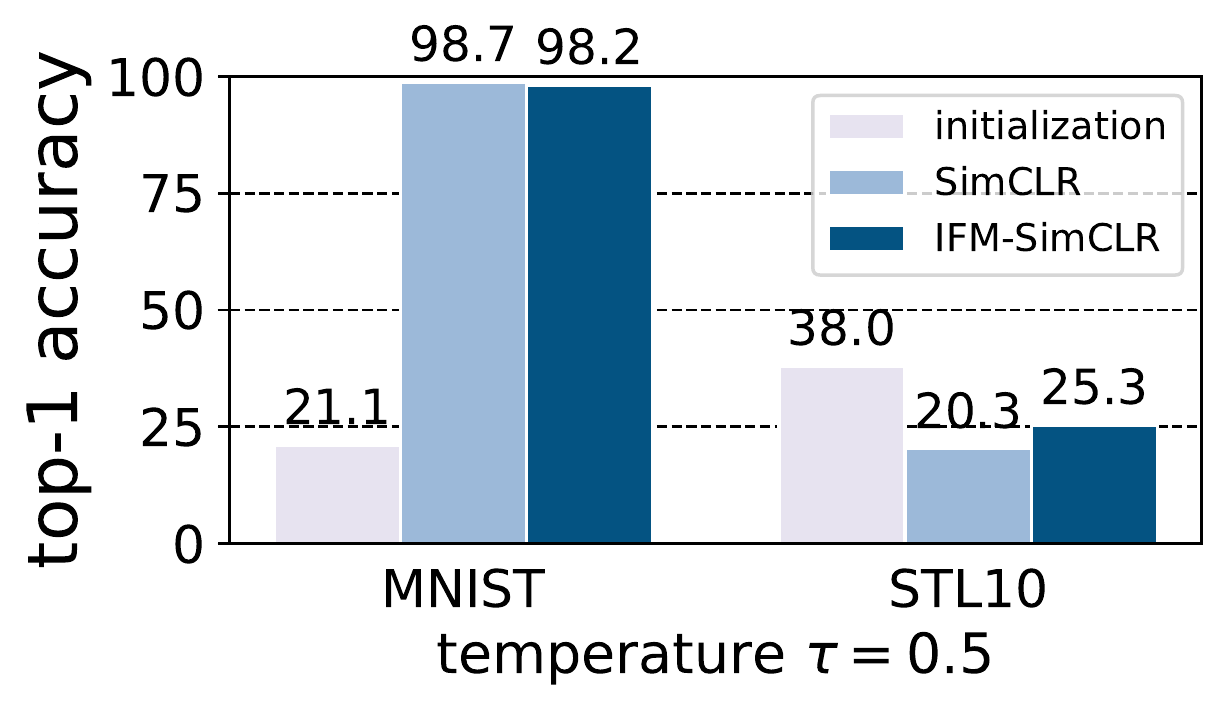}
    \includegraphics[width=0.32\textwidth]{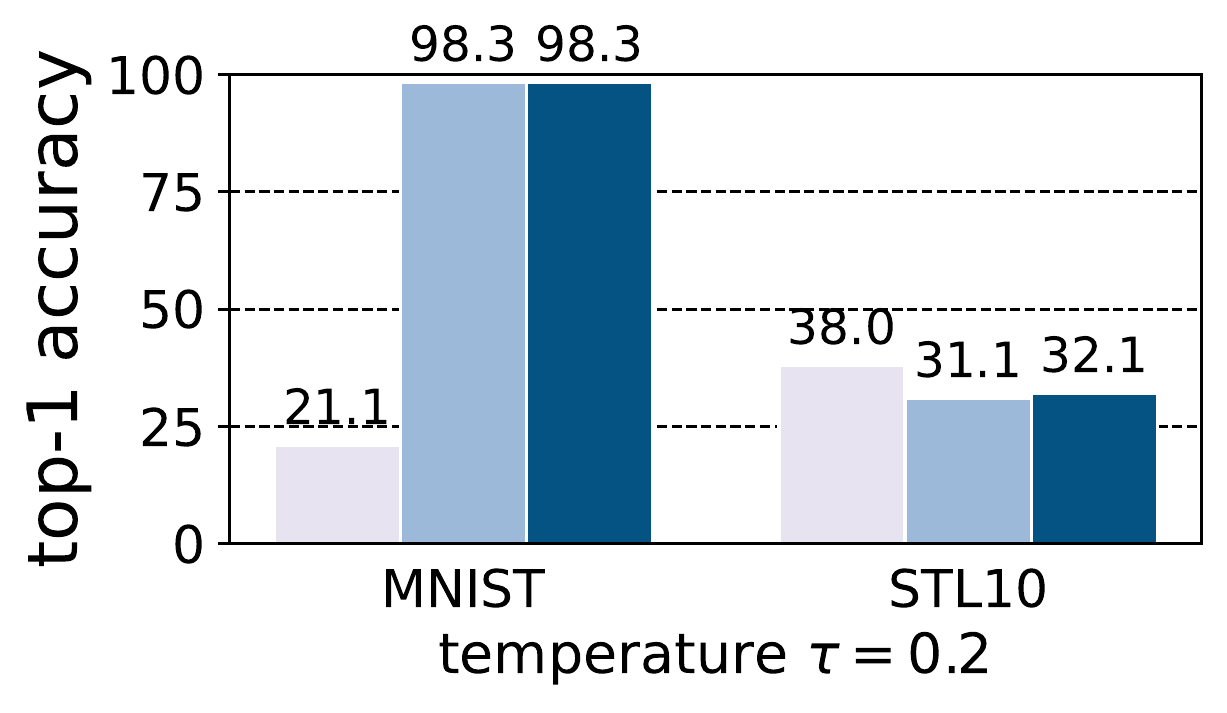}
    \includegraphics[width=0.32\textwidth]{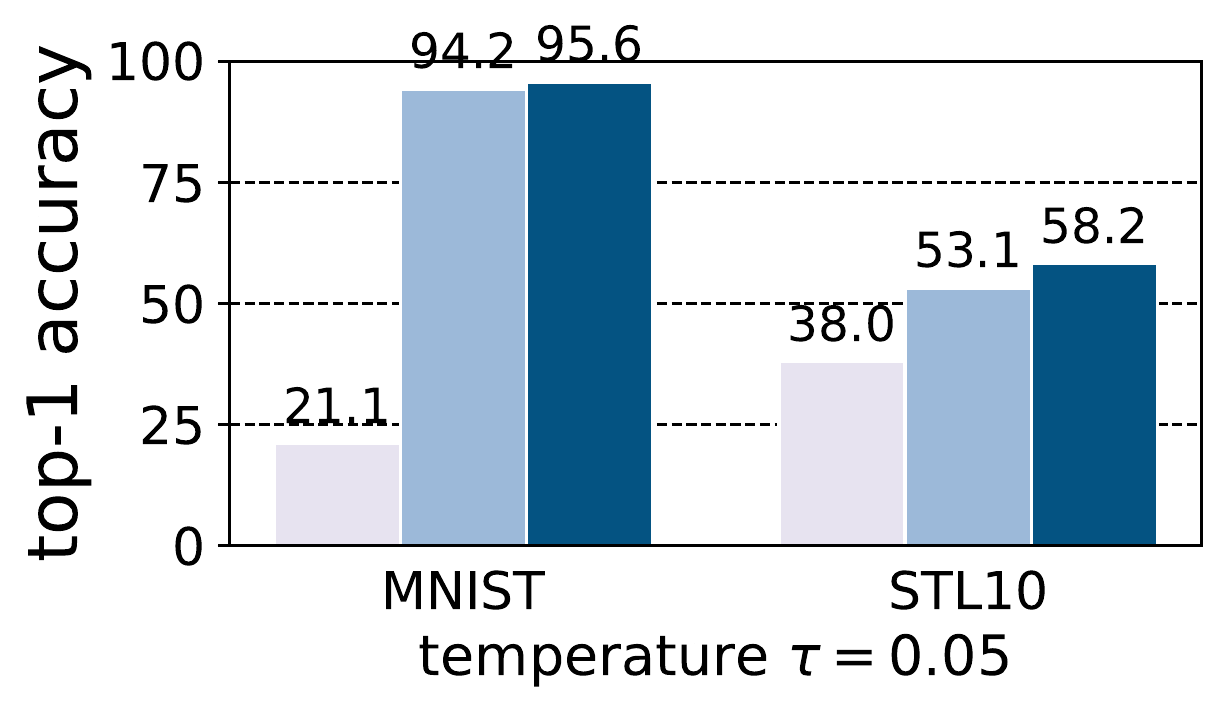}
      \vspace{-5pt}
    \caption{STL-digits dataset. Implicit feature modification reduces feature suppression, enhancing the representation of both MNIST and STL10 features simultaneously. All IFM runs use a fixed value $\varepsilon=0.1$, and loss $\mathcal L + 0.5 \cdot \mathcal L_\varepsilon$ (i.e. weighting parameter $\alpha=0.5$) to illustrate robustness to the choice of parameters. }
    \label{fig: simadv stldig}
  \end{center}
  \vspace{-10pt}
\end{figure}

\subsection{Comparing IFM and  ACL(DS)}
\label{appendix: acl comparison}

We give details for Fig \ref{fig: simadv acl comparison}. Similarly to concurrent work \cite{ho2020contrastive,kim2020adversarial}, ACL~\cite{jiang2020robust} directly performs PGD attacks in input space. We compare to the top performing version  ACL(DS) -- which uses a duel stream structure and combines standard and adversarial loss terms. We use the official ACL implementation\footnote{\url{https://github.com/VITA-Group/Adversarial-Contrastive-Learning}} and for fair comparison run IFM by changing only the loss function. All hyperparameters are kept the same for both runs, and follow the ACL recommendations.

\paragraph{Training} We use the SimCLR framework with a ResNet-18 backbone and train for 1000 epochs. We use a base learning rate of 5 with cosine annealing scheduling and batch size 512. LARS optimizer is used. For ACL(DS), we run the PGD for 5 steps in the pre-training stage following the practice of~\cite{jiang2020robust}.
%TODO: full hyperparameter detail's. 
\paragraph{Linear evaluation} We use two schemes to evaluate the quality of learnt representation: standard accuracy and robust accuracy. Robust accuracy reports the accuracy in the setting where an adversary is allowed to apply an $\ell_\infty$ attack to each input. For standard accuracy, we only finetune the last layer and test on clean images following the practice of  MoCo-v2~\cite{chen2020mocov2}.
The initial learning rate is set as 0.1 and we tune for 100 epochs for CIFAR10, 25 epochs for CIFAR100 respectively. An SGD optimizer is used to finetune the model. We use a step scheduler that decreases the learning rate by a factor of $10$ after epochs: $40, 60$ for CIFAR10; $15, 20$ for CIFAR100 respectively.
For robust accuracy, we finetune the model using the loss in TRADE~\cite{zhang2019theoretically}, and evaluate classification accuracy on adversarially
perturbed testing images. We use the same hyperparameters as ACL~\cite{jiang2020robust} for adversarial finetuning.
We perform experiments on CIFAR10 and CIFAR100 and the results are shown in Fig.~\ref{fig: simadv acl comparison}.

\paragraph{Results} See Fig. \ref{fig: simadv acl comparison} in the main manuscript for the results. There are significant qualitative differences between the behaviour of IFM and ACL(DS). IFM improves (standard) linear readout accuracy with zero memory or compute time cost increase, whereas ACL(DS) has improved adversarial linear readout performance, but at the cost of worse standard linear readout and $2\times$  memory and $6\times$ time per epoch. This shows that these two method are addressing two distinct problems. ACL(DS) is suitable for improving the adversarial robustness of a model, whereas IFM improves the generalization of a representation. 

\subsection{Object classification experiments}
\label{appdx: Object classification experimental setup details}

We first describe the protocol used for evaluating IFM on the following datasets: CIFAR10, CIFAR100, STL10, tinyImageNet. For simplicity, the objective weighting parameter is fixed at $\alpha=1$. %Official train/val data splits are used for all datasets.
For MoCo-v2, we performed 5-fold cross validation for CIFAR10/CIFAR100 datasets, and 3 replicated runs on official train/val data splits for tinyImageNet and STL10 datasets.

\paragraph{Training} All encoders have ResNet-50 backbones and are trained for 400 epochs with temperature $\tau=0.5$ for SimCLR and $\tau=0.1$ for MoCo-v2. Encoded features have dimension $2048$ and are followed by a two layer MLP projection head with output dimension $128$. Batch size is taken to be $256$, yielding negative batches of size $m= 510$ for SimCLR. For MoCo-v2, we use a queue size of $k=4096$ (except for STL10 dataset we use $k=8192$), and we use batch size of $256$ for CIFAR10, CIFAR100 and tinyImageNet, $128$ for STL10. For both SimCLR and MoCo-v2 we use the Adam optimizer. 

SimCLR uses initial learning rate $1\times10^{-3}$ and weight decay $1\times10^{-6}$ for CIFAR10, CIFAR100 and tinyImageNet, while STL10 uses $1\times10^{-1}$ learning rate, and weight decay $5\times10^{-4}$ (since we found these settings boosted performance by around $5\%$ in absolute terms). MoCo-v2 training uses weight decay $5\times10^{-4}$, and an initial learning rate $3\times10^{-2}$ for CIFAR10 and CIFAR100; and learning  rate $1\times10^{-1}$ for STL10 and tinyImageNet. Cosine learning rate schedule is used for MoCo-v2. 

\paragraph{Linear evaluation} Evaluation uses test performance of a linear classifier trained ontop of the learned embedding (with embedding model parameters kept fixed) trained for $100 $ epochs. 

For SimCLR, the batch size is set as $512$, and the linear classifier is trained using the Adam optimizer  with learning rate $1\times10^{-3}$ and weight decay $1\times10^{-6}$, and default PyTorch settings for other hyperparameters. For CIFAR10 and CIFAR100 the same augmentations as SimCLR are used for linear classifier training, while for STL10 and tinyImageNet no augmentations were used (since we found this improves performance). 

For MoCo-v2, the batch size is set as $256$. Training uses SGD with initial learning rate set to $30$, momentum is set as $0.9$ and a scheduler that reduces the learning rate by a factor of $10\%$ at epoch 30 and 60. The weight decay is $0$. For CIFAR10 and CIFAR100, we normalize images with mean of $[0.4914, 0.4822, 0.4465]$ and standard deviation of $[0.2023, 0.1994, 0.2010]$. For STL10 and tinyImageNet, we normalize images with mean of $[0.485, 0.456, 0.406]$ and standard deviation of $[0.229, 0.224, 0.225]$.
The same augmentations as the official MoCo-v2 implementation are used for linear classifier training.

%\vspace{-10pt}
\subsubsection{ImageNet100}\label{sec: MoCo-v2 for ImageNet100}
%\vspace{-10pt}
We adopt the official MoCo-v2 code\footnote{\url{https://github.com/facebookresearch/moco}} (CC-BY-NC 4.0 license), modifying only the loss function. For comparison with AdCo method, we adopt the official code\footnote{\url{https://github.com/maple-research-lab/AdCo}} (MIT license) and use the exact same hyperparmeters as for MoCo-v2. For the AdCo specific parameters we perform a simple grid search for the following two hyperparameters: negatives learning rate $lr_\text{neg}$ and negatives temperature $\tau_\text{neg}$. We search over all combinations $lr_\text{neg} \in \{ 1,2,3,4\}$ and $\tau_\text{neg} \in \{ 0.02,0.1\}$, which includes the AdCo default ImageNet1K recommendations $lr_\text{neg} =3$ and $\tau_\text{neg}=0.02$ \cite{hu2020adco}. The result reported for AdCo in Tab. \ref{tab: imagenet100} is the best performance over all $8$ runs. 

\vspace{-10pt}

\paragraph{Training} We use ResNet-50 backbones, and train for 200 epochs. We use a base learning rate of $0.8$ with cosine annealing scheduling and batch size $512$. The MoCo momentum is set to $0.99$, and temperature to $\tau=0.2$. All other hyperparameters are kept the same as the official defaults. 
\vspace{-10pt}
\paragraph{Linear evaluation} We train for $60$ epochs with batch size $128$. We use initial learning rate of $30.0$ and a step scheduler that decreases the learning rate by a factor of $10$ after epochs: $30,40,50$. All other hyperparameters are kept the same as the official MoCo-v2 defaults. 

As noted in the manuscript, our combination of training and linear evaluation parameters leads to $80.5\%$ top-1 linear readout for standard MoCo-v2, and $81.4\%$ with IFM-MoCo-v2. The standard MoCo-v2 performance of $80.5\%$ is, to the best of our knowledge, state-of-the-art performance on ImageNet100 using $200$ epoch training with MoCo-v2. For comparison, we found that using the default recommended MoCo-v2 ImageNet1k parameters (both training and linear evaluation) achieves  ImageNet100 performance of $71.8\%$. This choice of parameters maybe useful for other researchers using MoCo-v2 as a baseline on ImageNet100. 

\subsection{COPDGene dataset}\label{appdx: COPD}

The dataset~\cite{regan2011genetic} in our experiments includes 9,180 subjects. Each subject has a high-resolution inspiratory CT scan and five COPD related outcomes, including two continuous spirometry measures: (1) $\text{FEV1pp}$: the forced expiratory volume in one second, (2) $\text{FEV}_1/\text{FVC}$: the $\text{FEV1pp}$ and forced vital capacity ($\text{FVC}$) ratio, and three ordinal variables: (1) six-grade centrilobular emphysema (CLE) visual score, (2) three-grade paraseptal emphysema (Para-septal) visual score, (3) five-grade dyspnea symptom (mMRC) scale. The dataset is publicly available.

For fair comparison, we use the same encoder and data augmentation described in the baseline approach~\cite{sun2020context}. We set the representation dimension to 128 in all experiments. For simplicity, instead of using a GNN, we use average pooling to aggregate patch representations into image representation. The learning rate is set as $0.01$. We use Adam optimizer and set momentum as 0.9 and weight decay as $1\times10^{-4}$. The batch size is set as 128, and the model is trained for 10 epochs.

\subsection{Further discussion of feature robustness experiments (Sec. \ref{sec: robustnes})}\label{appdx: feature robustness}

Ilyas et al. \cite{ilyas2019adversarial} showed that deep networks richly represent so-called ``non-robust'' features, but that adversarial training can be used to avoid extracting non-robust features at a modest cost to downstream performance. Although in-distribution performance is harmed, Ilyas et al. argue that the reduction in use of non-robust features -- which are highly likely to be statistical coincidences due to the high  dimensionality of input data in computer vision -- may be desirable from the point of view of trustworthiness of a model under input distribution shifts. In this section we consider similar questions on the effect implicit feature modification on  learning of robust vs. non-robust features during self-supervised pre-training. 

Compared to supervised adversarial training \cite{ilyas2019adversarial,madry2017towards} our approach has the key conceptual difference of being applied in feature space. As well as improved computation efficiency (no PGD attacks required) Fig. \ref{fig: robust features} shows that this difference translates into different behavior when using implicit feature modification. Instead of suppressing non-robust features as Ilyas et al. observe for supervised representations, IFM \emph{enhances the representation of robust features}. This suggests that the  improved generalization of encoders trained with IFM can be attributed to improved extraction of features aligned with human semantics (robust features). However, we also note that IFM has no significant effect on learning of non-robust features. In Appdx. \ref{sec: limitations} we discuss the idea of combining IFM with adversarial training methods to get the best of both worlds.

\section{Discussion of limitations and possible extensions}\label{sec: limitations}

While our work makes progress towards understanding, and controlling, feature learning in contrastive self-supervised learning, there still remain many open problems and questions. First, since our proposed implicit feature modification method acts on embedded points instead of raw input data it is not well suited to improving $\ell_p$ robustness, and similarly is not suited to removing pixel-level  shortcut solutions. Instead our method focuses on high-level semantic features. It would be valuable to study the properties of our high-level method used in conjunction with existing pixel-level methods.

Second, while we show that our proposed implicit feature modification method is successful in improving the representation of multiple features simultaneously, our method does not admit an immediate method for determining \emph{which} features are removed during our modification step (i.e. which features are currently being used to solve the instance discrimination task). One option is to manually study examples using the visualization technique we propose in Sec. \ref{sec: vis}.

\end{document}